\documentclass{article}
\PassOptionsToPackage{numbers, compress}{natbib}

\usepackage[preprint]{neurips_2021}



\bibliographystyle{unsrt}
\usepackage{relsize}
\usepackage[utf8]{inputenc} 
\usepackage[T1]{fontenc}    

\usepackage{url}            
\usepackage{booktabs}       
\usepackage{amsfonts}       
\usepackage{nicefrac}       
\usepackage{microtype}      
\usepackage{multicol}
\usepackage{wrapfig}

\PassOptionsToPackage{table}{xcolor}
\usepackage[matit]{echodefs}
\usepackage{thmtools, thm-restate}
\usepackage{todonotes}
\usepackage{dirtytalk}
\usepackage{graphicx}
\usepackage{cleveref}
\usepackage{tcolorbox}
\usepackage[bottom]{footmisc}
\usepackage{enumitem}

\definecolor{Gray}{gray}{0.9}
\newcommand{\etal}{\textit{et al. }}

\usepackage{hyperref}       

\newcommand{\card}{\mathrm{card }}

\def\bL{{\mathbb L}}
\def\bS{{\mathbb S}}
\def\bE{{\mathbb E}}
\def\bH{{\mathbb H}}

\def\acos{{\mathrm{acos}}}
\def\acosh{{\mathrm{acosh}}}
\DeclarePairedDelimiter\ceil{\lceil}{\rceil}
\DeclarePairedDelimiter\floor{\lfloor}{\rfloor}

\title{Geometry of Similarity Comparisons}

%

\author{%
	Puoya Tabaghi\\
	Coordinated Science Lab\\
  ECE Department, UIUC\\
    \texttt{tabaghi2@illinois.edu} \\
  \And
	Jianhao Peng\\
	Coordinated Science Lab\\
  ECE Department, UIUC\\
    \texttt{jianhao2@illinois.edu} \\
  \And
	Olgica Milenkovic \\
  Coordinated Science Lab\\
  ECE Department, UIUC\\
  \texttt{milenkov@illinois.edu} \\
  \AND
  Ivan Dokmani\'c\\
  Department of Mathematics and Computer Science\\
  University of Basel\\
  \texttt{ivan.dokmanic@unibas.ch}
}

\begin{document}

\maketitle
\begin{abstract}
Many data analysis problems can be cast as distance geometry problems in \emph{space forms} -- Euclidean, spherical, or hyperbolic spaces. Often, absolute distance measurements are often unreliable or simply unavailable and only proxies to absolute distances in the form of similarities are available. Hence we ask the following: Given only \emph{comparisons} of similarities amongst a set of entities, what can be said about the geometry of the underlying space form? To study this question, we introduce the notions of the \textit{ordinal capacity} of a target space form and \emph{ordinal spread} of the similarity measurements. The latter is an indicator of complex patterns in the measurements, while the former quantifies the capacity of a space form to accommodate a set of measurements with a specific ordinal spread profile. We prove that the ordinal capacity of a space form is related to its dimension and the sign of its curvature. This leads to a lower bound on the Euclidean and spherical embedding dimension of what we term similarity graphs. More importantly, we show that the statistical behavior of the ordinal spread random variables defined on a similarity graph can be used to identify its underlying space form. We support our theoretical claims with experiments on weighted trees, single-cell RNA expression data and spherical cartographic measurements.
\end{abstract}

\section{Introduction}\label{sec:intro}
Distances reveal the geometry of their underlying space. They are at the core of many machine learning algorithms. In particular, finding a geometrical representation for point sets based on pairwise distances is the subject of distance geometry problems (DGPs). Euclidean DGPs have a rich history of applications in robotics~\cite{porta2005branch,tabaghi2019kinetic}, wireless sensor networks~\cite{so2007theory}, molecular conformation analysis~\cite{crippen1988distance} and dimensionality reduction~\cite{liberti2014euclidean}. One is typically concerned with finding a geometric representation for a set of measured Euclidean distances~\cite{dokmanic2015euclidean}. Beyond Euclidean DGPs, recent works have focused of hyperbolic geometry methods in data analysis, most notably when dealing with hierarchical data. Social and FoodWeb networks~\cite{verbeek2014metric,li2017inhomogeneous}, gene ontologies~\cite{ashburner2000gene}, and Hearst graphs of hypernyms~\cite{le2019inferring} are interesting examples of hierarchical datasets. Spherical embeddings represent sets of points on a (hyper)sphere~\cite{wilson2010spherical}, and have found applications in astronomy \cite{green1985spherical}, distance problems on Earth \cite{bai2015constrained}, and texture mapping \cite{elad2005texture}. Euclidean, spherical and hyperbolic geometries are categorical examples of constant curvature spaces, or space forms, which are characterized by their curvature and dimension. The above examples represent instances of metric embeddings in space forms, as opposed to what is termed \emph{nonmetric embeddings}. In the latter setting, one is provided with nonmetric information about data points, such as quantized distances or ordinal measurements such as comparisons or rankings.

We argue that nonmetric information such as \emph{distance comparisons} carries valuable information about the space the data points originated from. To formally state our claims, assume that we are given a set of points $x_1, \ldots, x_N$ in an unknown metric space $S$. In nonmetric embedding problems \cite{kruskal1964nonmetric,agarwal2007generalized}, we work with dissimilarity (similarity) measurements of the form
\[
\forall m,n \in [N] \bydef \set{1, \ldots, N}: y_{m,n} = \phi \big( d(x_m, x_n) \big),
\]
where $d(x_m,x_n)$ is the distance between $x_m$ and $x_n$ in $S$, and $\phi(\cdot)$ is an unknown monotonically increasing (or decreasing) function. Since $\phi$ is unknown, we can \emph{only} interpret the measurements as distance comparisons or ordinal measurements, i.e., if the entities indexed by $n_1,n_2$ are more similar than those indexed by $n_3,n_4,$ then
\[
y_{n_1, n_2} \leq y_{n_3,n_4} \xLeftrightarrow{ \ \ \mbox{increasing} \ \ \mathlarger{\phi} \ \ }  d(x_{n_1}, x_{n_2}) \leq   d(x_{n_3}, x_{n_4}).
\]
We hence ask: \emph{What do distance/similarity comparisons as those described above reveal about the space $S$?} Our work shows, for the first time, that one can use ordinal measurements to deduce the sign of the curvature and a lower-bound for the dimension of the underlying space form (in Euclidean and spherical spaces). The main results of our analysis are as follows.
\begin{enumerate}[leftmargin=*]
\item We introduce the notion of \emph{ordinal spread} of the sorted distance list, which is of fundamental importance in the study of the geometry of distance comparisons. The spread of ordinal measurements describes a pattern in which entities appear in the sorted list of distances, i.e., the ordinal spread gives the ranking of the first appearance of a data point in the list. 
This notion is related to another important geometric entity termed the \emph{ordinal capacity}.
\item We define the notion of \emph{ordinal capacity} of a space form to characterize the space's ability to host extreme patterns of ordinal spreads (computed from similarity measurements). We show that the ordinal capacity of a space form is related to its dimension and curvature sign. The ordinal capacity of Euclidean and spherical spaces are equal and grow exponentially with their dimensions, while the ordinal capacity of a hyperbolic space is infinite for any possible dimension of the space. 
\item We derive a deterministic lower bound for Euclidean and spherical embedding dimensions using ordinal spreads and the (finite) ordinal capacity. We also associate an \emph{ordinal spread random variable} with $(1)$ a set of random points in a space form, and $(2)$ a set of random vertex subsets from a similarity graph -- a complete graph with edge weights corresponding to similarity scores of their defining nodes. The distributions of these random variables serves as a practical tool to identify the underlying space form given a similarity graph.  
\item We illustrate the utility of our theoretical analysis by using them to correctly uncover the hyperbolicity of weighted trees. Moreover, we use them to detect Euclidean and spherical geometries for ordinal measurements derived from local and global cartographic data. Finally, we use the ordinal spread variables to determine the degree of heterogeneity of cell populations based on noisy scRNAseq data and how data imputation influences the geometry of the cell space trajectories. 
\end{enumerate} 
Due to space limitations, all proofs, algorithms, and extended discussions are delegated to the Supplement.
\subsection{Related Works}
In many applications we seek a representation for a group of entities based on their distances, but the exact magnitudes of the distances may be unavailable.
What often \emph{is} available (and prevalent) in applied sciences are nonmetric -- dissimilarity or similarity -- measurements: In neural coding~\cite{giusti2015clique}, developmental biology~\cite{klimovskaia2020poincare}, learning from perceptual data~\cite{demiralp2014learning}, and cognitive psychology~\cite{navarro2004common}. Unfortunately, the datasets used in most of these studies are small (often involving less than $100$ entities) and have limited utility for learning tasks that require sufficiently large sample complexity.

Nonmetric embedding problems date back to the works of Shepard~\cite{shepard1962analysis,shepard1962analysis2} and Kruskal \cite{kruskal1964nonmetric}. Inspired by the Shepard-Kruskal scaling problem, Agarwal~\etal\cite{agarwal2007generalized} introduce generalized nonmetric multidimensional scaling, a semidefinite relaxation used to embed dissimilarity (or similarity) ratings of a set of entities in Euclidean space. Stochastic triplet embeddings~\cite{van2012stochastic} and crowd kernel learning~\cite{tamuz2011adaptively} are used to embed triadic comparisons using probabilistic information. Tabaghi~\etal\cite{tabaghi2020hyperbolic} propose a semidefinite relaxation for metric and nonmetric embedding problems in hyperbolic space. In all these scenarios, the embedding space has to properly represent the measured data. For example, in developmental biology and cancer genomics, single-cell RNA sequencing (scRNAseq) is used to differentiate cell types and cycles. The classification results have important implications for lineage identification and monitoring cell trajectories and dynamic cellular processes~\cite{tanay2017scaling}. Klimovskaia~\etal\cite{klimovskaia2020poincare} use hyperbolic rather than Euclidean spaces for low-distortion embedding of complex cell trajectories (hierarchical structures). 

Learning from distance comparisons is an active area of research.
Among the relevant research topics are ranking objects from pairwise comparisons \cite{wauthier2013efficient,jamieson2011active}, theoretical analysis of necessary number of distance comparisons to uniquely determine the embedding \cite{jamieson2011low}, nearest neighbor search \cite{haghiri2017comparison}, random forests \cite{haghiri2018comparison}, and classification based on triplet comparisons \cite{cui2020classification}. Understanding the underlying geometry of ordinal measurements is important in designing relevant algorithms. 

Related to nonmetric embedding problems are the various techniques that study topological properties of point clouds independently of the choice of metric and of the geometric properties such as curvature~\cite{carlsson2009topology}. An important problem in this domain is to detect intrinsic structure in neural firing patterns, invariant under nonlinear monotone transformations of measurements. Giusti~\etal~\cite{giusti2015clique} propose a method based on clique topology of the graph of correlations between pairs of neurons. The clique topology of a weighted graph describes the behavior of cycles in its order complex~\cite{giusti2015clique} as a function of edge densities; these entities are also known as \emph{Betti curves}. The statistical behavior of Betti curves is used to distinguish random and geometric structures of moderate sizes in Euclidean space. The more recent work of Zhou~\etal~\cite{zhou2018hyperbolic} generalizes this statistical approach to hyperbolic spaces. These two works are the most closely related contributions to our proposed problem area. Nevertheless, the technical approaches used in there and in our work are fundamentally different. First, we provide a theoretical foundation for the study of geometric properties of space forms using similarity comparisons and derive the first known rigorous results related to their dimensions and curvatures. Second, we propose a computationally efficient method for inferring the sign of the curvature. The proposed statistical method can operate on large datasets as it uses subsampling techniques. Furthermore, we introduce new application areas in outlier identification, heterogeneity detection and imputation analysis for single-cell data measurements. To the best of our knowledge, we report the first study regarding the effect of different imputation degrees on the geometry of similarity measurements in these datasets. 
\section{The Ordinal Spread} \label{sec:ordinal_spread}
\emph{Preliminaries.}
A space form is a complete, simply connected Riemannian manifold of dimension $d \geq 2$ and constant sectional curvature. Up to an isomorphism, space forms are equivalent to spherical $(\bS^d)$, Euclidean $(\bE^d)$, or hyperbolic spaces $(\bH^d)$~\cite{lee2007riemannian}. Distance geometry problems (DGPs) are concerned with finding an embedding for a set of pairwise measurements in a space form. DGP problems can be categorized as metric~\cite{tabaghi2020hyperbolic}, nonmetric~\cite{agarwal2007generalized}, or unlabeled \cite{skiena1990reconstructing,jones2004introduction,huang2018reconstructing}, depending on the data modality and application domain. A nonmetric DGP aims to find $x_1, \ldots, x_N$ in a space form $S$, given a set of ordinal distance measurements $\mathcal{O} \subseteq [N]^4$ such that 
\begin{equation}\label{eq:ordinal_measurements}
\forall (n_1, n_2, n_3, n_4) \in \mathcal{O}: d(x_{n_1},x_{n_2}) \leq d(x_{n_3},x_{n_4}).
\end{equation}
Although there exist theoretical results on the uniqueness of Euclidean embeddings~\cite{kleindessner2014uniqueness} (up to an ordinal invariant transformation, i.e., an isotony), most often the underlying geometry of ordinal measurements is not known a priori~\cite{klimovskaia2020poincare,cao2013similarity,mcfee2011learning}. 

We consider the problem of identifying the underlying space form from a given set of pairwise distance comparisons. For sufficiently many comparisons, this problem is equivalent to inferring geometrical information through the \emph{sorted distance list} associated with ordinal measurements~\eqref{eq:ordinal_measurements}. A deterministic or a randomized binary sort algorithm needs at least $\Theta({N \choose 2} \log {N \choose 2} )$ pairwise comparisons to uniquely find the sorted distance list, if such a list exists~\cite{cormen2009introduction}. Hence, we can define the \emph{sorted index list} 
$(i_r,j_r)_{r \in {N \choose 2}}$ according to
\begin{equation}\label{eq:sorted_distance}
d(x_{i_1},x_{j_1}) \geq \cdots \geq d(x_{i_{{N \choose 2}}},x_{j_{{N \choose 2}}}),
\end{equation}
where $i_{r} < j_{r}$ for all $ r \in \big[ {N \choose 2} \big]$ and all pairs of indices are distinct. Any geometry-related inference problem must be \emph{invariant} with respect to arbitrary permutations of the point indices. In particular, the pattern of the newly added indices in the sorted index list~\ref{eq:sorted_distance} is invariant to the permutations of point indices and has important geometrical implications. We formalize this notion in~\Cref{def:kth_ordinal_spread}.
\begin{tcolorbox}[colback=blue!4!white,colframe=blue!4!white]
\begin{definition}\label{def:kth_ordinal_spread}
The $n$-th ordinal spread of $N$ points with a sorted index list is defined as
\begin{equation*}
\forall n \in [N]: \alpha_{n} = \min \set{m \in \mathbb{N}: \card  \ \bigcup_{r=1}^{m} \set{i_{r}, j_{r}} \geq n }.
\end{equation*}
\end{definition}
\end{tcolorbox}
Alternatively, the ordinal spread $\alpha_n$ is the rank of the first appearance of the $n$-th point in the sorted index list, i.e.,
$\card  \ \bigcup_{r=1}^{\alpha_{n}-1} \set{i_{r}, j_{r}} < n, \ \card  \ \bigcup_{r=1}^{\alpha_{n}} \set{i_{r}, j_{r}} \geq n.$
As an example, for $d(x_{1},x_{2}) \geq d(x_{1},x_{3}) \geq \cdots$, we have $\alpha_{3}=2$. From~\Cref{def:kth_ordinal_spread}, we observe that one can compute the ordinal spread $\alpha_n$ without knowing the point set positions, the distance and $\phi(\cdot)$ function or even the type of underlying space. 
Nevertheless, in~\Cref{sec:stylized_experiments} and later on, we use $\alpha_n ( \set{x_n}_{n=1}^{N} )$ to denote the $n$-th ordinal spread computed for the points $\set{x_n}_{n=1}^{N}$ in a metric space.

In general, the ordinal spreads $\set{ \alpha_n}_{n \in [N]}$ depend on the configuration of the underlying point set, up to a similarity preserving map~\cite{kleindessner2014uniqueness}. In \Cref{prop:kth_osn}, we make the first step in studying ordinal spread variables by computing their range of possible values.
\begin{tcolorbox}[colback=red!4!white,colframe=red!4!white]
\begin{restatable}[]{proposition}{kth_osn} \label{prop:kth_osn}
For a set of $N \geq 4$ points with a given sorted index list, we have 
\begin{itemize}
\begin{minipage}{0.5\linewidth}
    \item $\alpha_{1}  = \alpha_{2} = 1$, $\alpha_{3}  = 2$; 
\end{minipage}
\begin{minipage}{0.5
\linewidth}
    \item $4 \leq n \leq N: \ceil*{\frac{n}{2}} \leq \alpha_{n} \leq {n-1 \choose 2}+1.$
\end{minipage}
\end{itemize}
\end{restatable}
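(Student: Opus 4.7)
The plan is to split the statement into the base cases ($n \le 3$) and the two range bounds for $n \ge 4$, and to prove everything by elementary combinatorial counting on the sorted index list, using only the facts that (i) for every $r$ we have $i_r < j_r$ (so each pair contributes exactly two distinct indices) and (ii) all pairs $(i_r,j_r)$ are distinct as elements of $\binom{[N]}{2}$.

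For the base cases, I would start from the observation that a single pair contains exactly $2$ distinct indices, which immediately yields $\alpha_1 = \alpha_2 = 1$. For $\alpha_3$, the same observation gives $\alpha_3 \ge 2$; the matching upper bound follows from distinctness of the first two pairs, since $(i_2,j_2)\neq (i_1,j_1)$ forces $(i_2,j_2)$ to contribute at least one new index, hence $\card\,\{i_1,j_1\}\cup\{i_2,j_2\} \ge 3$, so $\alpha_3 = 2$.

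For the general lower bound $\alpha_n \ge \ceil*{n/2}$ when $n\ge 4$, I would use the trivial cardinality inequality $\card\,\bigcup_{r=1}^m \{i_r,j_r\} \le 2m$; in order for this union to attain size $n$ we need $2m\ge n$, that is $m\ge \ceil*{n/2}$. For the upper bound $\alpha_n \le \binom{n-1}{2}+1$, the key combinatorial fact is that a set of $n-1$ labels supports only $\binom{n-1}{2}$ distinct unordered pairs. I would argue by contradiction: if $\alpha_n > \binom{n-1}{2}+1$, then $\card\,\bigcup_{r=1}^{\binom{n-1}{2}+1} \{i_r,j_r\} \le n-1$, so the $\binom{n-1}{2}+1$ pairwise distinct pairs $(i_1,j_1),\dots,(i_{\binom{n-1}{2}+1},j_{\binom{n-1}{2}+1})$ are confined to a pool of at most $\binom{n-1}{2}$ possible pairs, which violates pigeonhole.

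I expect no real obstacle in this proof: it is essentially two one-line pigeonhole arguments sandwiched between the small-$n$ checks. The only conceptual subtlety worth flagging in the write-up is that both bounds are sharp (the lower one is attained by a configuration in which successive pairs share no index until size $n$ is reached, while the upper one is attained when the first $\binom{n-1}{2}$ pairs saturate a fixed $(n-1)$-subset before a new index appears), which is precisely what forces the two-sided statement in \Cref{prop:kth_osn}.
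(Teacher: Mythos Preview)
Your proposal is correct and follows essentially the same approach as the paper: the base cases and the lower bound are argued identically, and for the upper bound the paper phrases the same counting observation in its contrapositive form (the extremal configuration has the $n-1$ smallest distances all incident to a single point, giving $\binom{n}{2}-(n-1)+1=\binom{n-1}{2}+1$), which is exactly your pigeonhole on the $\binom{n-1}{2}$ available pairs among $n-1$ labels.
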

\end{tcolorbox}
Clearly, the $N$-th ordinal spread variable, $\alpha_N$, is the largest ordinal spread value which makes it a good choice for inferring geometry-related properties. In comparison, $\alpha_1$, $\alpha_2$, and $\alpha_3$ are fixed and independent on the space and hence noninformative (see the Supplement for more details). We next provide two illustrative examples that show how the ordinal spread $\alpha_N$ may be used to reveal the hyperbolic, Euclidean, and spherical geometry of the measurements. These results motivate the study of ordinal capacity.
\vspace{-5pt}
\subsection{Hyperbolicity of Trees} \label{sec:stylized_experiments}
\begin{figure}[b]
	\includegraphics[width=1 \linewidth]{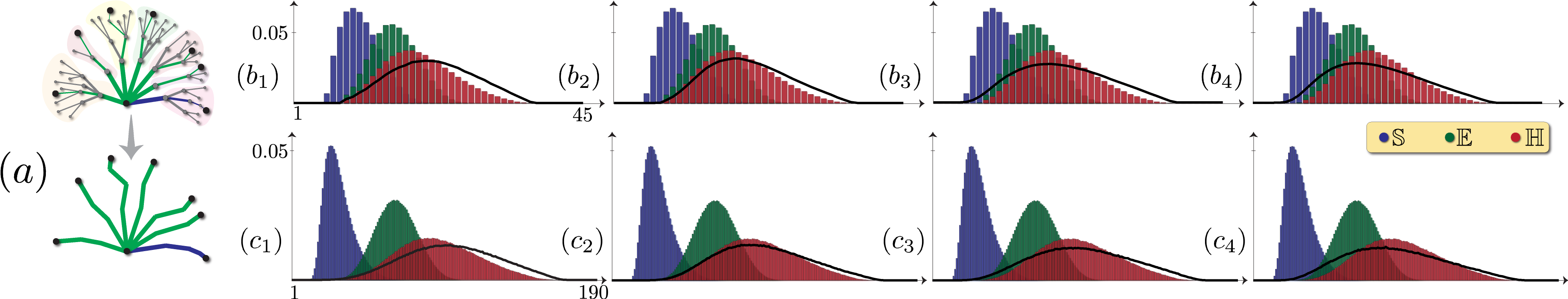}
	\vspace{-5pt}
  	\caption{$(a)$ Random selection of a sub-tree of size $N$. PMFs of $\alpha_{10}$ (top row) and $\alpha_{20}$ (bottom row) for random points in $\bH^2$ (red), $\bE^2$ (green), and $\bS^2$ (blue). The black plots are empirical PMFs of $\alpha_{N}$ derived from $(b_1, c_1)$ the noise-less tree $T$, $(b_2, c_2)$ the additive noise contaminated tree, $(b_3, c_3)$ the tree with permutation noise, and  $(b_4, c_4)$ a tree with both previous forms of noise.}
  	\label{fig:tree_example}
\end{figure}
Hyperbolic spaces are space forms that offer small distortion when embedding trees \cite{sarkar2011low,ganea2018hyperbolic}. Here, we describe how to verify this hyperbolicity by using \emph{ordinal spread random variables}. We generate a random tree $T$ with the vertex set $V = [10^4]$. The maximum node degree is $3$ and edge weights are i.i.d. realizations of a $\mathrm{unif}(0,1)$-distributed random variable. Let $d_{m, n}$ be the distance between nodes $m$ and $n$ in $V$, defined as the sum of the weights on the unique path connecting the vertices. Then, we randomly subsample $10^6$ different node subsets, or sub-cliques, of size $N \in \set{10, 20}$ ($N \ll |V| = 10^4$) from $T$ as shown in~\Cref{fig:tree_example} $(a)$. For each randomly selected sub-clique, we compute its $N$-th ordinal spread, $\alpha_N$. Due to the inherently random nature of the clique selection process, $\alpha_N$ is a random variable which we term the ordinal spread random variable for the tree $T$. We can then compute the empirical distribution of the random variable, as illustrated in~ \Cref{fig:tree_example}$(b_1, c_1)$. This motivates the following definition:
\begin{tcolorbox}[colback=blue!4!white,colframe=blue!4!white]
\begin{definition}\label{def:ordinal_spread_variable}
Let $S$ be a metric space, and $P$ be a probability distribution on $S$. With a slight abuse of notation, we define the ordinal spread random variable $\alpha_N$ as
\[
\forall N \in \mathbb{N}: \alpha_N = \alpha_N(X), \ X \sim P^{\otimes N}.
\]
\end{definition}
\end{tcolorbox}
An ordinal spread random variable is defined with respect to the distribution $P$. Let us assume an \emph{oracle} picks a set of distributions for embedded points in each space form, e.g., (projected) normal for hyperbolic and Euclidean spaces, and uniform distribution in the spherical space. The distribution of the corresponding ordinal spread random variable  $\alpha_{N}$ is invariant with respect to scaling. More precisely, it is invariant to strongly isotonic point transformations (more information in the Supplement). 
As the results in~\Cref{fig:tree_example}$(b_1, c_1)$ indicate, the empirical distribution of $\alpha_N$ derived from a weighted tree $T$ best matches (in the sense of total variation distance between the probability measures) with that of a random hyperbolic point set. For further verification, we repeated the same experiment for a  random tree $T$ with $(1)$ additive measurements noise, e.g., $\tilde{d}_{m,n} = d_{m,n}+\eta$ where $\eta$ is a sample of a zero-mean Gaussian noise (with $20$ decibel signal-to-noise ratio), $(2)$ random permutation noise for the sorted index lists, e.g., $\tilde{i} = \pi(i)$ and $\tilde{j} = \pi(j)$ where $\pi$ is a permutation with average displacement of $|V| = 10^4$, and $(3)$ both additive and permutation noise; see~\Cref{fig:tree_example}$(b_2, c_2), (b_3, c_3)$, and $(b_4, c_4)$. The results clearly show that the distribution of the ordinal spread variable $\alpha_N$ is robust to noise and that it closely matches with that of a random hyperbolic point cloud. 
An important implication of this example is that ordinal spread variables can be used to determine the curvature sign of the underlying space. A more rigorous justification is provided in the subsequent exposition in~\Cref{sec:ordinal_capacity,sec:stylized_experiments2}, where we formally connect the support of ordinal spread variables to a specific property of their underlying space forms, i.e., their \emph{ordinal capacity}. In the Supplement, we show how to use ordinal capacity to compute a deterministic lower bound for the Euclidean embedding dimension of this tree.
\subsection{Euclidean and Spherical Geometries of Cartographic Data}
\begin{figure}[b]
	\includegraphics[width=1 \linewidth]{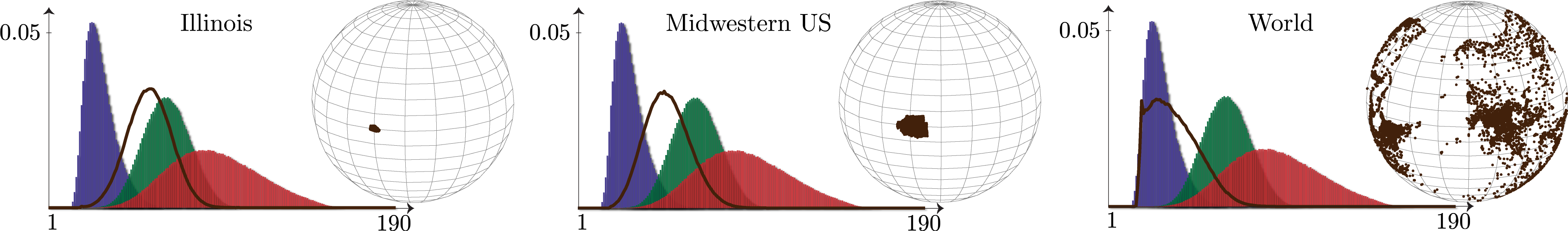}
	\vspace{-5pt}
  	\caption{The empirical PMFs of $\alpha_{20}$ derived from subsampling the dissimilarity (distance) graph associated with points in the state of Illinois, across the Midwestern USA, and the world. Colored plots are PMFs of for random points in $\bH^2$ (red), $\bE^2$ (green), and $\bS^2$ (blue).} 
  	\label{fig:city_distances}
\end{figure}
We describe next an experiment pertaining to the ordinal spread (random) variables of a similarity graph for geospatial data. The main idea is to use the distribution of these variables to show that the intrinsic geometry of small regions on the globe, which are ``flat,'' is close to Euclidean, whereas that of large regions, which are spread across the globe, are close to spherical. 

We use three datasets: $(1)$ $1,627$ counties in the state of Illinois, $(2)$ $11,954$ counties in Midwestern states, and $(3)$ $10^4$ (subsampled) cities and towns across the world; refer to the Supplement for details on data sources. We construct the dissimilarity graph by computing the pairwise distances between the points using the Haversine formula, which determines the great-circle distance between two points on the globe given their longitudes and latitudes \cite{van2012heavenly}. For each data set, we compute the empirical PMF of $\alpha_{20}$ from $10^6$ randomly selected cliques of size $20$ each; the results are shown in~\Cref{fig:city_distances}. Comparing the PMFs for $\alpha_{20}$ and for random hyperbolic, Euclidean, and spherical points, we clearly observe the shift from an (approximately) Euclidean to a spherical geometry as the area spanned by the sampled points increases. We emphasize that these results are derived from distance comparisons only, since we discard the metric information in the distances.
\vspace{-5pt}
\section{The Ordinal Capacity}\label{sec:ordinal_capacity}
\vspace{-5pt}
In the numerical experiment of \Cref{sec:stylized_experiments}, we discovered a distinguishing statistical behavior for the ordinal spread of randomly generated points in each possible space form. We show in what follows that this distinguishing pattern is related to the \emph{capacity} of each space form to accommodate ordinal spread random variables with their underlying distributions. We define \textit{ordinally dense sets} and show how they can help determine the support (the range of possible values) of the ordinal spread random variables in a space form\footnote{We adopt Mirsky's notation $\set{m,n}_{\neq}$ for a set with two distinct elements $m$ and $n$ \cite{mirsky1971transversal}.}.
\begin{tcolorbox}[colback=blue!4!white,colframe=blue!4!white]
\begin{definition}\label{def:ordinal_capacity}
Let $\{{x_1, \ldots, x_N\}}$ be a set of distinct points in a metric space $S$. If
\[
\exists n_0 \in [N]: \sup_{n \in [N]\setminus \set{n_0}} d(x_{n}, x_{n_0}) \leq \inf_{\set{m,n}_{\neq} \subseteq [N]\setminus \set{n_0}} d(x_{m}, x_{n}),
\]
then we say that $\set{x_n}_{n=1}^{N}$ is an ordinally dense set in $S$, or in short $\set{x_n}_{n=1}^{N} \sqsubseteq S$.
\end{definition}
\end{tcolorbox}
In a nutshell,~\Cref{def:ordinal_capacity} identifies point configurations that have a maximum possible ordinal spread. Intuitively, a set of $N$ points is ordinally dense in $S$ if and only if it has a subset of $N-1$ points whose pairwise distances are {\bf all} larger than (or equal to) their distances to the $N$-th point, i.e.,
\begin{equation}\label{eq:ordinally_dense_ordinal_spread}
\set{x_n}_{n=1}^{N} \sqsubseteq S \Longleftrightarrow \alpha_N \left( \set{x_n}_{n=1}^{N} \right) = {N-1 \choose 2}+1.
\end{equation}
\begin{figure}[b]
	\center
	\includegraphics[width=0.6 \linewidth]{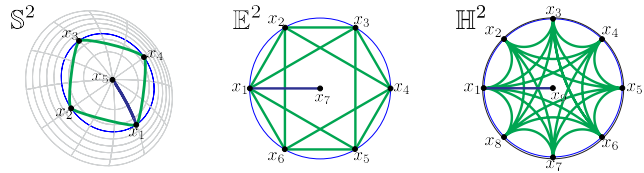}
	\vspace{-5pt}
  	\caption{Ordinally dense point sets in $2$-d space forms. As all distances in the (Euclidean) hexagon are greater than or equal to their distances to the center, the point set achieves the capacity $K(\mathbb{E}^2) = 7$. }
  	\label{fig:non_metric_illustration}
  	\vspace{-5pt}
\end{figure}
The existence of an ordinally dense set of size $N$ depends on the geometry of the underlying metric space, and is closely tied to what we term the \emph{ordinal capacity} of the space (see~\Cref{fig:non_metric_illustration}). 
\begin{tcolorbox}[colback=blue!4!white,colframe=blue!4!white]
\begin{definition}\label{def:ordinal_capacity_number}
The ordinal capacity of a metric space $S$ is defined as
\[
K (S) = \sup \set{ \card \set{x_n} :  \set{x_n} \sqsubseteq S}.
\]
\end{definition}
\end{tcolorbox}
The ordinal capacity is an indicator of the capability of a metric space to realize an extremal pattern of point indices in the sorted index list~\eqref{eq:ordinally_dense_ordinal_spread}. In the next theorem, we show that the ordinal capacity of a space form is intimately related to a spherical cap packing problem~\cite{rankin1955closest}, which is concerned with the maximum number of non-overlapping spherical caps (or domes with a certain polar angle) in a hypersphere.
\begin{tcolorbox}[colback=red!4!white,colframe=red!4!white]
\begin{restatable}[]{theorem}{ocn}
\label{thm:ordinal_capacity_number}
Let $N_d$ be the spherical $\frac{\pi}{6}$-cap packing number of $\mathbb{S}^d$. The ordinal capacity of a space form $S$ is given by
\begin{equation*}
K (S) = 
\begin{cases}
\begin{aligned}
&+\infty, &&\mbox{ if } S \cong \bH^d   \\
& N_d+1, &&\mbox{ if } S \cong \bE^d, S \cong \bS^d.
\end{aligned}
\end{cases}
\end{equation*}
\end{restatable}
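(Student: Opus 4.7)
The proof splits into three cases, and in each the plan is to translate the ordinal density condition into an angular separation constraint at the chosen center. With $x_0$ playing the role of center and the remaining points $x_1,\dots,x_{N-1}$ located at radial distances $r_i \leq r := \max_i d(x_i, x_0)$ along unit tangent directions $u_i \in T_{x_0} S$, the pairwise constraint $d(x_i, x_j) \geq r$ expands via the appropriate law of cosines into an upper bound on $\cos \theta_{ij} = \langle u_i, u_j \rangle$, from which the three claims fall out.

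For the hyperbolic part I would fix any $N$ and any $N-1$ directions with pairwise angles bounded below by some $\theta_{\min} > 0$, set $x_i = \exp_{x_0}(r u_i)$, and rearrange $\cosh d(x_i, x_j) = \cosh^2 r - \sinh^2 r \cos \theta_{ij}$ to the equivalent condition $\cos \theta_{ij} \leq \cosh r / (\cosh r + 1)$. Since the right-hand side tends to $1$ as $r \to \infty$, choosing $r$ large enough makes the constraint hold for all pairs, delivering an ordinally dense set of arbitrary cardinality and hence $K(\bH^d) = +\infty$.

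In the Euclidean case the law of cosines together with $r \geq r_i, r_j$ yields $\langle u_i, u_j \rangle \leq (r_i^2 + r_j^2 - r^2)/(2 r_i r_j) \leq \min(r_i, r_j) / (2 \max(r_i, r_j)) \leq 1/2$, so the $N-1$ unit directions form a $\pi/6$-cap packing on the tangent unit sphere and hence $N - 1 \leq N_d$. The matching construction selects a maximal packing $u_1, \dots, u_{N_d}$ with $\langle u_i, u_j \rangle \leq 1/2$, takes $x_0 = 0$ and $x_i = r u_i$, and verifies $\|x_i - x_j\|^2 = 2 r^2 (1 - \langle u_i, u_j \rangle) \geq r^2$. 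The spherical upper bound is analogous, using $\cos d(x_i, x_j) = \cos r_i \cos r_j + \sin r_i \sin r_j \cos \theta_{ij}$: a short monotonicity check shows that $(\cos r - \cos r_i \cos r_j)/(\sin r_i \sin r_j)$ is nondecreasing in each of $r_i, r_j$ on $(0,r]$, so its maximum over $r_i, r_j \leq r$ is $\cos r / (1 + \cos r) \leq 1/2$, again forcing a $\pi/6$-packing. For the lower bound on $K(\bS^d)$ I would transport a maximal $\pi/6$-packing from the tangent sphere onto $\bS^d$ and choose $r$ small enough that the strict spherical bound still admits that packing.

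The step I expect to be most delicate is the spherical lower bound: the constraint $\cos \theta_{ij} \leq \cos r/(1 + \cos r)$ is strictly below $1/2$ for every $r > 0$, so an equality-achieving Euclidean configuration cannot be lifted verbatim to $\bS^d$. The argument must exploit that as $r \to 0^+$ the bound tends to $1/2$, so a packing whose pairwise angles are strictly greater than $\pi/3$ is realized at a suitably small $r$; the borderline dimensions where the $\pi/6$-packing is rigid at equality require an additional continuity or limiting argument to promote the nearly-extremal constructions to a genuine ordinally dense set of size $N_d + 1$.
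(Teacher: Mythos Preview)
Your proposal follows the same three-case strategy as the paper, and the hyperbolic and Euclidean arguments are correct and complete. The main technical difference is in how you reduce to the cap-packing problem. The paper proves two ``radial projection'' lemmas (its Lemmas~1 and~2) showing that, without loss of generality, all peripheral points may be taken at the \emph{same} distance from the center; once the radii are equal, the $\pi/3$ threshold drops out immediately. You instead bound $\cos\theta_{ij}$ directly via the law of cosines and the monotonicity of $(\cos r-\cos r_i\cos r_j)/(\sin r_i\sin r_j)$, never normalizing the radii. Both routes reach the same packing reduction; yours is a bit more elementary, while the paper's projection lemmas make the extremal configurations (points on a common sphere about the center) explicit, which it later reuses in the proof of Theorem~2.

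Your flag on the spherical lower bound is apt, and you should know that the paper does not resolve it any more fully than you do. After reducing to the equidistant case with center-distance $\acos(1-\epsilon)$, the paper computes the required angular separation as $\acos\!\big(\tfrac{\epsilon(1-\epsilon)}{1-(1-\epsilon)^2}\big)$, takes $\sup_{\epsilon}$ to obtain the threshold $\tfrac{1}{2}$, and declares the problem ``equivalent'' to the $\pi/6$-cap packing in $\R^d$ --- without addressing whether a maximal packing that is rigid at equality (all angles exactly $\pi/3$) can actually be realized at some $\epsilon>0$. So the ``additional continuity or limiting argument'' you anticipate is not supplied in the paper either; it simply does not isolate the issue. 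Your instinct that this is the delicate step is correct.
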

\end{tcolorbox}
\Cref{thm:ordinal_capacity_number} shows that the ordinal capacity of space forms depends on their curvature sign and dimension. The ordinal capacity of a hyperbolic space is infinite, regardless of its dimension. This implies that for any $N \in \N$, there exists an ordinally dense hyperbolic point set $\set{x_n}_{n=1}^{N}$. In the Poincar\'e model, a centered regular $(N-1)$-gon with an additional point in the ``center'' is an ordinally dense set (see~\Cref{fig:non_metric_illustration}). In contrast, Euclidean and spherical spaces have \emph{equal and finite} ordinal capacities. This finding is intuitively clear because any tangent space of $\bS^d$ is a linear space of dimension $d$, and the spherical distance converges to the $\ell_2$ distance as the distance between the points diminishes. In the Supplement, we propose a refinement for the ordinal capacity of spherical spaces by imposing a minimum distance constraint for the point sets. We note that the current notion of ordinal capacity does not distinguish between hyperbolic spaces of different dimensions. Therefore, one may need to develop a more refined notion of ordinal capacity for hyperbolic spaces, e.g., based on extremal appearance patterns of \emph{multiple} indices in the distance lists.

Using the previous result, we can numerically compute an upper bound on of $N_d$, $\rho_d$, as a function of $d$, e.g. $\rho_1 = 2, \rho_2 = 6, \rho_3 = 15, \rho_4 = 31, \rho_5 = 59, \rho_6 = 106$~\cite{rankin1955closest}. Note that the packing number $N_d$ grows exponentially with the dimension $d$~\cite{wyner1967random}. Hence, we have the following assymptotic bound for the ordinal capacity of a $d$-dimensional Euclidean (or spherical) space:
\[
	- \log\big(\frac{\sqrt{3}}{2}\big) +o(d) \leq \frac{1}{d}\log K(\bE^d) \leq -\log\big(\frac{\sqrt{2}}{2}\big)  +o(d).
\]
\section{The Support of Ordinal Spread Random Variables} \label{sec:stylized_experiments2}
In \Cref{sec:ordinal_spread}, we showed numerical evidence that ordinal spread random variables in Euclidean, spherical, and hyperbolic geometries have different supports. We therefore ask: \emph{What is the maximum achievable ordinal spread, $\alpha_N$, for a point set of size $N > K(S)$?} The answer to this question determines the support of  ordinal spread random variables in Euclidean and spherical spaces, regardless of their underlying distribution $P$ (see~\Cref{def:ordinal_spread_variable}). Note that since the ordinal capacity of a hyperbolic space is infinite, there always exists a point set of size $N$ with maximum ordinal spread of ${N-1 \choose 2}+1$ (see~\Cref{prop:kth_osn}). For our subsequent analysis, we define the \emph{$N$-point ordinal spread of a space form $S$} to be the maximum attainable ordinal spread $\alpha_N$ for the points in $S$.
In~\Cref{thm:space_form_n_point_curvature}, we express this quantity in terms of the ordinal capacity of $S$.
\begin{tcolorbox}[colback=red!4!white,colframe=red!4!white]
\begin{restatable}[]{theorem}{sfnp_curvature}
\label{thm:space_form_n_point_curvature}
The $N$-point ordinal spread of a space form $S$ is given by
\[
A_N(S) \bydef \sup_{X \in S^N} \alpha_{N} \left( X \right) = E \big( T(N-1,K(S)-1) \big) +1,
\] 
where $E \big( T(N,K) \big)$ is the number of edges of $T(N,K)$, the $K$-partite Tur\'an graph~\cite{turan1941external} with $N$ vertices. 
\end{restatable}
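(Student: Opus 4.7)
The plan is to reduce the extremal question to a Tur\'an-type bound on a graph of ``large'' pairwise distances, match the bound by an explicit construction built from an ordinally dense set of maximum size, and treat the hyperbolic case separately as a degenerate limit.

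For the upper bound I would fix any configuration $X = \set{x_1, \ldots, x_N} \subset S$ and, after relabeling, take $x_N$ to be the point whose first appearance in the sorted distance list~\eqref{eq:sorted_distance} is at position $\alpha_N(X)$; set $r = \max_{k \in [N-1]} d(x_k, x_N)$. The $\alpha_N(X) - 1$ pairs at positions $1, \ldots, \alpha_N(X)-1$ all avoid the index $N$ and have distance at least $r$, so the graph $G$ on $[N-1]$ with edge set $\set{\set{i,j} : d(x_i, x_j) \geq r}$ satisfies $|E(G)| \geq \alpha_N(X) - 1$. I would then show that $G$ is $K_{K(S)}$-free: any clique $\set{i_1, \ldots, i_{K(S)}}$ in $G$ satisfies $d(x_{i_a}, x_{i_b}) \geq r \geq d(x_{i_a}, x_N)$ for all $a \neq b$, so $\set{x_N, x_{i_1}, \ldots, x_{i_{K(S)}}}$ would be an ordinally dense set of size $K(S) + 1$ with center $x_N$, contradicting~\Cref{def:ordinal_capacity_number}. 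Tur\'an's theorem then forces $|E(G)| \leq E(T(N-1, K(S)-1))$, yielding $A_N(S) \leq E(T(N-1, K(S)-1)) + 1$.

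For achievability with $K(S) < \infty$, I would start from an ordinally dense set $\set{x_0, v_1, \ldots, v_{K(S)-1}} \subset S$ of maximum size (center $x_0$, granted by~\Cref{thm:ordinal_capacity_number}), partition $[N-1]$ into blocks $B_1, \ldots, B_{K(S)-1}$ whose sizes match the balanced parts of $T(N-1, K(S)-1)$, place each $x_k$ (for $k \in B_i$) in general position inside a ball of radius $\epsilon$ around $v_i$, and set $x_N = x_0$. For $\epsilon$ small enough, within-block distances are $O(\epsilon)$ and strictly below every distance from $x_N$ to another point, while between-block distances lie within $O(\epsilon)$ of $d(v_i, v_j) \geq \max_k d(v_k, x_0)$; since the $E(T(N-1, K(S)-1))$ between-block pairs form a complete multipartite graph covering every vertex of $[N-1]$, the sorted list processes all of them before the first pair touching $x_N$, giving $\alpha_N = E(T(N-1, K(S)-1)) + 1$. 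When $K(S) = \infty$ the upper bound coincides with the trivial ${N-1 \choose 2}+1$ from~\Cref{prop:kth_osn}, attained by any ordinally dense set of size $N$ in $\bH^d$, and $T(N-1, K(S)-1)$ is interpreted as the complete graph $K_{N-1}$.

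The hard part will be the achievability step in the tight regime, where the ordinally dense inequality $d(v_i, v_j) \geq \max_k d(v_k, x_0)$ is saturated for some pairs (e.g., adjacent vertices of the Euclidean hexagon realising $K(\bE^2) = 7$), so some between-block distances can tie distances to $x_N$. I would resolve this via a generic perturbation of the cluster placements: the linear conditions encoding ``between-block distance exceeds the $x_N$-distance'' at each tied pair are non-degenerate on the perturbation space, so an open set of perturbations breaks every tie in the favorable direction while preserving the combinatorial block structure, producing an unambiguous sorted list that realizes the claimed value of $\alpha_N$.
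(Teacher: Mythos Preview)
Your upper bound is correct and takes a cleaner, genuinely different route than the paper. The paper first invokes its projection lemma (from the proof of \Cref{thm:ordinal_capacity_number}) to assume the non-center points lie on the unit sphere about $x_N$, then runs a merging process: whenever $d(x_i,x_j)<1$ it collapses $x_i$ onto $x_j$ (choosing the direction by a degree comparison so that $\alpha_N$ does not decrease), eventually lumping the $N-1$ points at $K\le K(S)-1$ cap-center positions and reading off the Tur\'an bound from the resulting complete $K$-partite graph. You bypass both the normalization and the merging: the single observation that a $K(S)$-clique in $G=\set{\{i,j\}:d(x_i,x_j)\ge r}$ together with $x_N$ would itself be ordinally dense of size $K(S)+1$ makes $G$ $K_{K(S)}$-free, and Tur\'an's theorem applies directly. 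This works verbatim in any metric space with finite ordinal capacity, whereas the paper's argument is executed specifically for points normalized onto a Euclidean sphere.

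Your achievability step, however, has a real gap in the tie-breaking. The generic perturbation you propose cannot succeed in the extremal regime. Take the regular hexagon realising $K(\bE^2)=7$: adjacent vertices satisfy $d(v_i,v_{i+1})=d(v_i,x_0)=1$, and there is \emph{no} perturbation in $\bE^2$ making all fifteen pairwise distances among the six outer points strictly exceed $\max_k d(v_k,x_0)$. Indeed, after radially projecting to the boundary circle (which, by the projection lemma in the paper's proof of \Cref{thm:ordinal_capacity_number}, can only increase those distances), one would need six points on a circle of radius $r$ with all pairwise distances $>r$, i.e., all six consecutive arcs $>60^\circ$, which cannot sum to $360^\circ$. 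So the linearised constraints you invoke are degenerate here, not generic. The paper sidesteps this by explicitly allowing coinciding points in its merging step and, implicitly, favourable tie-breaking in the sorted list. Your construction is salvaged the same way: the cluster configuration already gives between-block distances $\ge r$ under the weak inequality of \Cref{def:ordinal_capacity}, so simply choose a sorted list that ranks all $E\big(T(N-1,K(S)-1)\big)$ between-block pairs ahead of the first pair containing $x_N$, rather than trying to force strict inequalities by perturbation.
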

\end{tcolorbox}
As a conclusion, the $N$-point ordinal spread of a space form, i.e., the support of its ordinal spread random variable $\alpha_N$, depends on its ordinal capacity and the number of points $N$. For a space $S$ with finite ordinal capacity, there exists a point set $X  \in S^N$ such that $\alpha_N(X) < {N-1 \choose 2}+1$. This holds if $N > K(S)$. With this result, we can revise the ordinal spread bound in~\Cref{prop:kth_osn}.
\begin{tcolorbox}[colback=red!4!white,colframe=red!4!white]
\begin{restatable}[]{proposition}{kth_osn} \label{prop:kth_osn_v2}
For a set of $N \geq 4$ points in a space form $S$, we have 
\begin{itemize}
\begin{minipage}{0.5\linewidth}
    \item $\alpha_{1}  = \alpha_{2} = 1$, $\alpha_{3}  = 2$, 
\end{minipage}
\begin{minipage}{0.5
\linewidth}
    \item $4 \leq n \leq N: \ceil*{\frac{n}{2}} \leq \alpha_{n} \leq A_n(S).$
\end{minipage}
\end{itemize}
\end{restatable}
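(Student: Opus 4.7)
My plan is to reduce the statement to two independent pieces: the easy half, which I inherit from \Cref{prop:kth_osn}, and the refined upper bound $\alpha_n \leq A_n(S)$, which is the only genuinely new content.

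First I would observe that the equalities $\alpha_1 = \alpha_2 = 1$ and $\alpha_3 = 2$, together with the lower bound $\lceil n/2 \rceil \leq \alpha_n$ for $n \geq 4$, are purely combinatorial consequences of the sorted index list in \eqref{eq:sorted_distance}: each new pair contributes at most two new indices, so covering $n$ distinct indices requires at least $\lceil n/2 \rceil$ pairs. These statements are independent of the ambient space and are already proved in \Cref{prop:kth_osn}, so I would import them directly without modification.

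The key new step is the upper bound. Given $X = \{x_1,\ldots,x_N\} \in S^N$ with sorted index list, I set $U = \bigcup_{r=1}^{\alpha_n-1} \{i_r,j_r\}$; by definition of $\alpha_n$, $|U| < n$, so I can choose a subset $Y \subseteq X$ of size exactly $n$ containing (the points indexed by) $U$. I would then argue that the first $\alpha_n - 1$ pairs of $X$'s sorted list are precisely the top $\alpha_n - 1$ entries of the sorted distance list of $Y$ considered on its own: these pairs are $Y$-pairs by construction, and any strictly larger $Y$-pair would have appeared earlier in $X$'s list, which would force one of its endpoints to be counted in $U$ — a contradiction unless that pair is already among the first $\alpha_n - 1$. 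Consequently, the top $\alpha_n - 1$ pairs of $Y$'s list cover at most $|U| < |Y| = n$ points, which by \Cref{def:kth_ordinal_spread} forces $\alpha_n(Y) \geq \alpha_n$. Since $Y \in S^n$, \Cref{thm:space_form_n_point_curvature} gives $\alpha_n(Y) \leq A_n(S)$, and chaining the two inequalities yields $\alpha_n \leq A_n(S)$.

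The main subtlety I expect to manage carefully is that the pair at rank $\alpha_n$ may introduce either one or two new indices, so there is no canonical ``first $n$ points'' subset to restrict to. My argument dodges this by only constraining $Y$ through $U$, the points appearing strictly before rank $\alpha_n$, and then completing $Y$ to size $n$ freely. A secondary bookkeeping point — tie-breaking when several distances coincide — does not affect the conclusion, since $\alpha_n$ and $A_n(S)$ are defined through the sorted list and its extremum over $S^n$, both of which are invariant under any consistent ordering of equal distances.
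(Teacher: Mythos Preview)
Your proposal is correct, and it is essentially aligned with the paper's approach --- both appeal to the definition $A_n(S)=\sup_{Y\in S^n}\alpha_n(Y)$ from \Cref{thm:space_form_n_point_curvature}. The paper's own proof is a single sentence (``the proof follows from the definition of $A_N(S)$ in Theorem~2''), which literally only covers the case $n=N$; your subset-restriction argument supplies the reduction for $4\le n<N$ that the paper leaves implicit. So you are not taking a different route, you are filling in the step the paper skips: exhibiting an $n$-point subconfiguration $Y$ with $\alpha_n(Y)\ge\alpha_n(X)$, so that the definitional bound on $A_n(S)$ can be invoked. The tie-breaking remark is appropriate --- under ties you simply give $Y$ the sorted list inherited from $X$, and the supremum defining $A_n(S)$ absorbs the choice.
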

\end{tcolorbox}
\Cref{thm:space_form_n_point_curvature} and~\Cref{prop:kth_osn_v2} explain in part the discriminatory ability of the support of ordinal spread random variables across different space forms. The $N$-point ordinal spread of a hyperbolic space $\bH^d$ is the maximum value possible, i.e., $A_{N}(\bH^d) = {N-1 \choose 2}+1$, regardless of its dimension. Even though the $N$-point ordinal spread of Euclidean and spherical spaces, $\mathbb{E}^d$ and $\mathbb{S}^d$, varies with their dimension, they are equal to each other. This is evident from our distribution-free analysis of the ordinal capacity of these spaces (see \Cref{thm:ordinal_capacity_number} and its subsequent discussion).
However, we can extend our distribution-free results to the following coarse lower bound for Euclidean (or spherical) embedding dimension of a similarity graph, 
\[
\min \set{d: \sup_{X \subseteq V: |X| = N}\alpha_N(X) \leq A_N(\mathbb{E}^d), \forall N \in [|V|]} \leq d
\]
where $V$ is the vertex set of the graph. We may relax an exhaustive search over all $2^N$ vertex subsets, to a search over a random subselection of vertices. In the Supplement we use such a relaxation to compute a lower bound for the embedding dimension of the tree discussed in \Cref{sec:stylized_experiments}.
\subsection{Visualizing Point Sets with Maximum Ordinal Spread}
\begin{figure}[b]
\center{
\vspace{-0.1in}
  \includegraphics[width=.77 \linewidth]{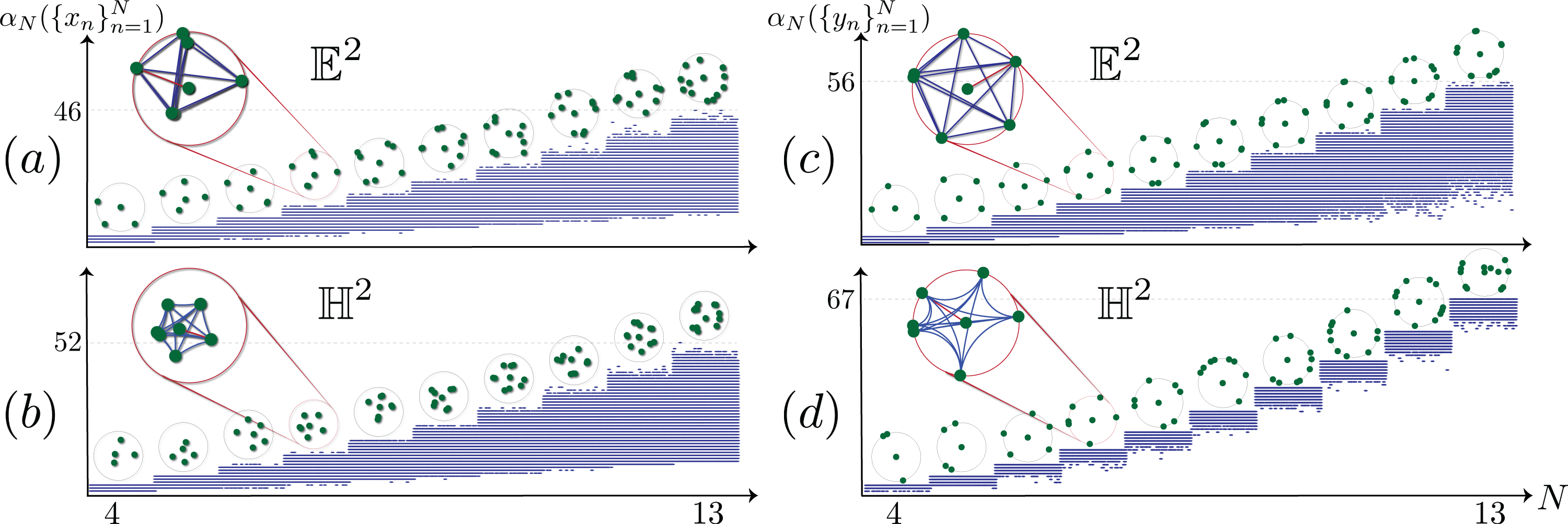}}
  \caption{Ordinal spread of $5\times 10^5$ i.i.d. point sets in $\bE^2$ and $\bH^2$. For fixed $N$, we mark the set with the maximum ordinal spread: $\set{x_n}_{n=1}^{N}$ in Figures $(a,b)$ and $\set{y_n}_{n=1}^{N}$ in Figures $(c,d)$.}
  \label{fig:oin_toy}
\end{figure}
Here, we aim to gain geometrical intuition about the point sets with maximum ordinal spread in different space forms. To this end, we generate independent and identically distributed point sets from a (projected) normal distribution in $2$-dimensional hyperbolic and Euclidean spaces. For each realization $\set{x_n}_{n=1}^{N}$, we compute the corresponding ordinal spread $\alpha_N$. The maximum ordinal spread of the generated point sets, $\widehat{A}_{N}$, gives an estimate for $A_N(\bE^2)$ and $A_N(\bH^2)$ (see ~\Cref{thm:space_form_n_point_curvature}). We repeat this experiment for varying size of the point sets $N \in \set{4,5, \ldots, 13}$.

For $5 \times 10^5$ realizations, we pick the point configurations with maximum ordinal spread; see~\Cref{fig:oin_toy} $(a,b)$. Recall that the point set with the theoretical maximum ordinal spread must have $N-1$ points sampled from a sphere centered at the $N$-th point. So, we repeat this experiment by fixing a point at $0$, and projecting the remaining points to their circumscribed circle, i.e., $\forall n \in [N-1]: y_n = \frac{r}{\norm{x_n}}x_n, \ \ \mbox{and}  \ \ y_N = 0$, where $r = \max_{n \in [N-1]}\norm{x_n} $. The randomly selected points $\set{y_n}_{n=1}^{N}$ produce a more accurate estimate for $A_N(\bH^2)$ and $A_N(\bE^2)$; see~\Cref{fig:oin_toy} $(c,d)$. For example, we have $\widehat{A}_{13}(\bE^2) = 56$, compared to the theoretical bound $A_{13}(\bE^2) \leq 58$. Also, the estimated $N$-point ordinal spread of a hyperbolic space perfectly matches with the theoretical bound $A_N(\bH^2) = {N-1 \choose 2}+1$. The latter result is due to the capacity of hyperbolic spaces to host infinitely many ordinally dense point sets. Hence,  the probability of randomly selecting an ordinally dense hyperbolic point set, of size $N$, is greater than its Euclidean counterpart.

Perhaps the most important observation is that the individual points in the extremal sets aggregate on nonoverlapping spherical caps of a circle, as seen in~\Cref{fig:oin_toy} $(c)$. The ordinal capacity of a space form equals the total number of such caps plus one (for the center point), i.e., $N_d +1$. For example, there are $5$ strictly non-overlapping spherical caps for $2$-dimensional Euclidean space, whereas this number is infinite for hyperbolic spaces. Finally, these results illustrate that the $N$-th ordinal spread of each space form, $A_N(S)$, is the total number of edges in Tur\'an graphs (see~\Cref{thm:space_form_n_point_curvature,thm:ordinal_capacity_number}). 

\vspace{-5pt}
\section{Numerical Experiments: Single-cell RNA Sequencing Data}\label{sec:Numerical_Results}
\vspace{-5pt}
Here, we focus on results pertaining to an important new data format omnipresent in computational molecular biology: \emph{single-cell RNA sequencing} (scRNAseq) data. By using recently developed single-cell isolation and barcoding techniques, and by trading individual cell coverage for the number of cells captured, scRNA-seq data for the first time enables studying the activity patterns of millions of individual cells. This is in stark contrast with traditional bulk sequencing techniques that only provide \emph{averaged snapshots} of cellular activity; scRNAseq measurements are also of special importance in cancer biology, as cancer cells are known to contain highly heterogeneous cell populations and the degree of heterogeneity carries significant information about disease progressions and the effectiveness of treatments~\cite{meacham2013tumour}. Important for our study is the fact that due to the large number of different cells sequenced, cell measurements are extremely sparse and \emph{imputed} in practice~\cite{van2017magic,li2018accurate,hicks2018missing}. 

Further, it has been pointed out~\cite{hicks2018missing, eraslan2019single} that scRNA-seq data is very noisy due to biological stochasticity as well as dropouts and systemic noise. Existing methods for denoising and imputation of raw scRNA-seq data often involve building connection graphs among cells~\cite{li2018accurate,van2017magic} using the distance between cells to diffuse the expression profiles among neighboring cells and smooth out possible outliers. Thus, \emph{relative expression differences (comparisons)}, rather than \emph{absolute expression values}, enable more accurate biological data mining via clustering, lineage detection, or inference of pseudotemporal orderings of cells~\cite{plass2018cell}. As an example,~\cite{klimovskaia2020poincare} constructs similarity probabilities from a relative forest accessibility (RFA) matrix~\cite{chebotarev1997matrix} and uses the obtained values to suggest that hyperbolic spaces are more suitable than Euclidean spaces for scRNAseq data embedding. 
\begin{figure}[t]
  \center
  \includegraphics[width=1 \linewidth]{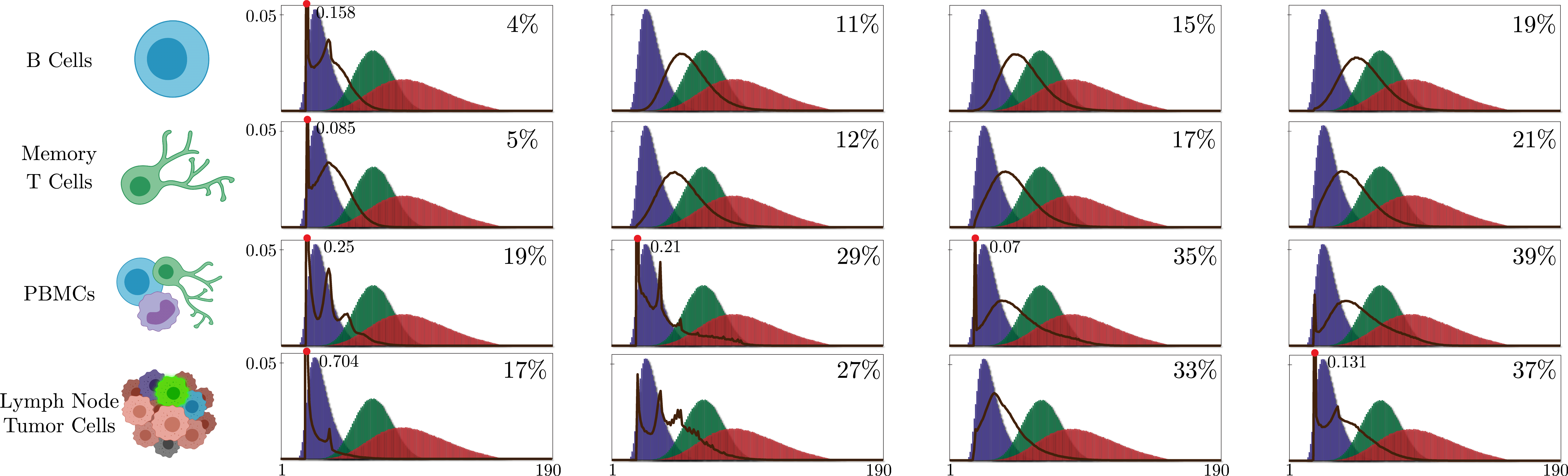}
    \vspace{-5pt}
    \caption{The empirical PMFs of $\alpha_{20}$ derived from subsampling the RFA similarity graph associated with scRNAseq data from homogeneous B cells ($\approx 10,000$ cells) and memory T cells ($\approx 10,000$), and heterogeneous PBMCs ($\approx 10,000$) and lymph node tumor cells ($\approx 3,000$). The left column shows the results for the raw data. From left to right, we increase the percentage of imputed data (densities are shown in the top-right corner).}
  \label{fig:RNAseq}
    \vspace{-14pt}
\end{figure}
We illustrate next that identifying the geometric properties of scRNAseq data using comparisons also provides unique information about the diversity of cellular populations~\cite{meacham2013tumour}, outliers and the properties of imputation methods. Furthermore, since scRNA captures temporal hierarchical information about cells, as well as the cyclic nature of cell cycles, we expect spherical space forms to be equally useful as hyperbolic space forms in the process of embedding. To this end, we compute the empirical distribution of ordinal spread random variables associated with scRNA lymphoma (cancer) cells and cell \emph{families} known as mononuclear cells (PBMCs), comprising T cells, B cells, and monocytes, which are often targeted in cancer immunotherapy. In this case, as illustrated by our numerical findings in~\Cref{fig:RNAseq}, these distributions contain peaks for small values that indicate that the data us sparse and contains outliers or highly heterogeneous cellular populations. Intuitively, probability peaks for small values of $\alpha_N$ arise when newly added indices in the ordered distance list appear in quick succession which can be attributed to one or multiple points at large distance from the remaining points (outliers); for more details see the Supplement. As imputation adds new data points by using averaged and smoothed information of observed measurements, it is expected to remove peaks in the aforementioned distributions, which is clearly the case for homogeneous cellular populations, but not for cancer cells and PBMCs. The reason why imputation does not remove peaks for the latter two categories can be attributed to the fact that the peaks arise due to the presence of many different cell types (e.g., recall that PBMCs contain B,T and monocytes and consequently, multiple peaks are observed in the ordinal spread distributions of raw data) which cannot and should not be smoothed out to form one class as this defeats the purpose of using single-cell measurements. Equally importantly, the results show that the Magic imputation software we used~\cite{van2017magic} imputes information into the noisy measurements without changing the geometry of the data, which is an important indicator of the quality of the procedure. 
\begin{figure}[b]
\vspace{-10pt}
  \begin{center}
    \includegraphics[width=0.70\textwidth]{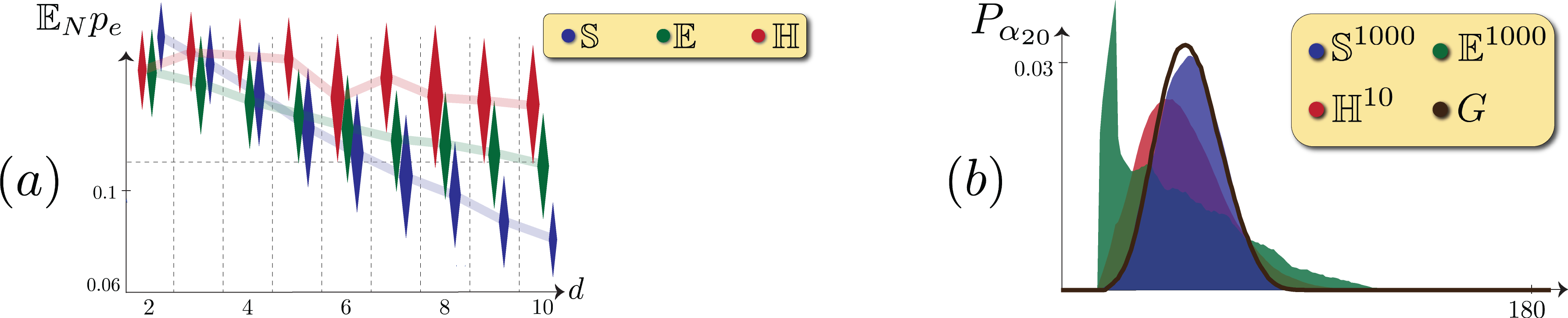}
  \end{center}
  \vspace{-10pt}
  \caption{$(a)$ PMFs of $\alpha_{20}$ from the RFA similarity graph ($G$) vs. random points in space forms of optimal dimensions. $(b)$ $\mathbb{E}_N p_e$ for embedded points in $d$-dimensional space forms.}
    \vspace{-12pt}
  \label{fig:nonmetric_embedding}
\end{figure}
Our next results pertain to the actual embedding quality of the measured similarities. We consider nonmetric embeddings~\cite{tabaghi2020hyperbolic,agarwal2007generalized} of RFA scores of scRNAseq data from adult planarians~\cite{plass2018cell}. The data set contains $N \approx 26,000$ cells with gene expression vectors of dimension $d \approx 21,000$. In~\Cref{fig:nonmetric_embedding} $(a)$, we report the empirical probability of incorrect comparison $\mathbb{E}_{N} p_e$ for embedding RFA similarities in different space forms of varying dimensions. The results thus confirm that a \emph{spherical} geometry is actually better suited for accurate nonmetric embeddings, which supports the frequently ignored understanding that cells are measured at various stages of the same cell cycle. For our analysis, we compute $\widehat{P}_{\alpha_{20}}$ from the similarity graph $G$. Due to the heavy-tailed nature of the original data distribution, we choose (oracle) log-normal distributions for the points in each space form. Then, we repeat the experiments for various dimensions and each space form/distribution parameters to find the closest ordinal spread variable to $\widehat{P}_{\alpha_{20}}$. From~\Cref{fig:nonmetric_embedding} $(b)$, we conclude that an ordinal spread variable from a high-dimensional ($\approx 1,000$) \emph{spherical space} best matches $\widehat{P}_{\alpha_{20}}$. For the details about the embedding methods, datasets, and further discussions refer to the Supplement.
\begin{ack}
Parts of the work were funded by the NSF grant number 2008125.
\end{ack}
\bibliography{bibfile}
\newpage
\section*{\huge \center SUPPLEMENTARY MATERIALS}

\tableofcontents
\newpage
\section{PROOFS OF THEOREMS AND PROPOSITIONS}

\paragraph{Notation}
For any two numbers $a,b \in \R$, we let $a \lor b$ and $a \land b$ be their maximum and minimum. We use small letters for vectors, $x \in \R^{m}$, and capital letters for matrices, $X = (x_{i,j})\in \R^{m \times n}$. We denote the $m$-th standard basis vector in $\R^M$ by $e_m$, $m \in [M]$ and let $[M]$ be short for the set $\set{1, \ldots, M}$.  For vectors $x,y \in \R^{d+1}$, their dot product is denoted by $\langle x, y \rangle $, and their Lorentzian inner product is $[x,y] = -x_0y_0 + \sum_{i=1}^{d}x_i y_i$. The $d$-dimensional 'Loid model of hyperbolic space is a Riemannian manifold $\mathbb{L}^d = \set{x \in \R^d: [x,x] = -1}$ with the distance function given by $d(x,y) = \acosh (-[x,y])$. Finally, $0$ and $1$ are all-zero and all-one vectors of appropriate dimensions. Let $C$ be a subset of a space form $(S,d)$, and $x \in S$; We define
\begin{align*}
d_{\min}(C) &= \inf \set{d(x,y): x,y \in C, x \neq y},  \\
d_{\max}(x,C) &= \sup \set{d(x,y): y \in C}.
\end{align*}
The cardinality of a discrete set $C$ is denoted by $\card \ C$. The graph-theoretic notations simplifies the main results of this paper. For a graph $G$, we denote its edge set as $E(G)$. Let $G_{p_1, \ldots, p_K}$ be a complete $K$-partite graph with part sizes $p_1, \ldots p_K$. The Tur\'an graph~\cite{turan1941external} $T(N,K)$ is a complete $K$-partite graph with $N$ vertices, and part sizes \footnote{From $\sum_{k=1}^{K}p_k = N$, we have $N_1 =  \floor*{ \frac{N}{K} }$, $K_1 = N-K N_1$. }
\begin{align*}
p_{k} = \begin{cases}
N_1+1, &\mbox{ for } \ 1\leq k \leq K_1\\
N_1, &\mbox{ for } \ K_1+1 \leq k \leq K.
\end{cases}
\end{align*}
Then, $\card \ E\big(T(N,K) \big) = {N \choose 2} - K_1 {N_1+1 \choose 2}  - \left( K-K_1 \right) {N_1 \choose 2}$. \footnote{This is simplified from $\card \ E \big( G_{p_1, \ldots, p_K} \big) = {N \choose 2}  - \sum_{k=1}^{K} {p_k \choose 2}$. For $K>N$, we assume the graph is complete and $E(T(N,K)) = {N \choose 2}$.}
\subsection{Proof of Proposition 1}  \label{sec:kth_osn}
From Definition 1, the values for $\alpha_1(X), \alpha_2(X)$ and $\alpha_3(X)$ are trivial. The lower bound for $\alpha_N( X )$ simply follows from the uniqueness of  pairwise distances. To put formally, we have
\[
\alpha_{N} \left( X\right) =   \min \set{m \in \mathbb{N}: \card  \ \bigcup_{r=1}^{m} \set{i_{r}, j_{r}} = N } \geq \ceil*{\frac{N}{2}}.
\]
For the upper bound, $\alpha_N( X )$ is maximum when all $N-1$ smallest pairwise distances are incident to a unique point; see Figure 2 $(b)$. The total length of the distance list is $ {N \choose 2}$. Therefore, we have
\[
\alpha_{N} \left( X \right) \leq {N \choose 2} -(N-1)+1 = {N-1 \choose 2} +1.
\]

{\bf Remark.} Let us devise an experiment to show how the $k$-th ordinal spread can distinguish space forms. We randomly generate i.i.d. points $\set{x_n}_{n=1}^{N}$ from absolutely continuous distributions with full support in hyperbolic, Euclidean and spherical spaces.\footnote{Uniform distribution for spherical, and projected normal for hyperbolic space, i.e., $y = [\sqrt{1+\norm{x}^2}, x^\T]^\T$ where $x \sim \mathcal{N}(0,\sigma^2 I)$.} In \Cref{fig:ordinal_test}, we plot the $k$-th ordinal spread $\alpha_k$ for each realization $\set{x_n}_{n=1}^{N}$. We find the empirical maximum of $\alpha_{N}$ to be the most sensitive indicator of the geometry of underlying space. While the emerging pattern of $\alpha_{N}$'s is dependent on the distribution of point sets, the behavior of empirical maximum of $N$-th ordinal spread is robust to the choice of point set distributions, as it converges to its supremum almost surely. Therefore, we introduce the \emph{$N$-point ordinal spread} for a metric space -- a tool to categorize space forms based on their ability to house extremal ordinal patterns, in the sense of  Theorem~2.
\begin{figure}[t]
  \includegraphics[width=1 \linewidth]{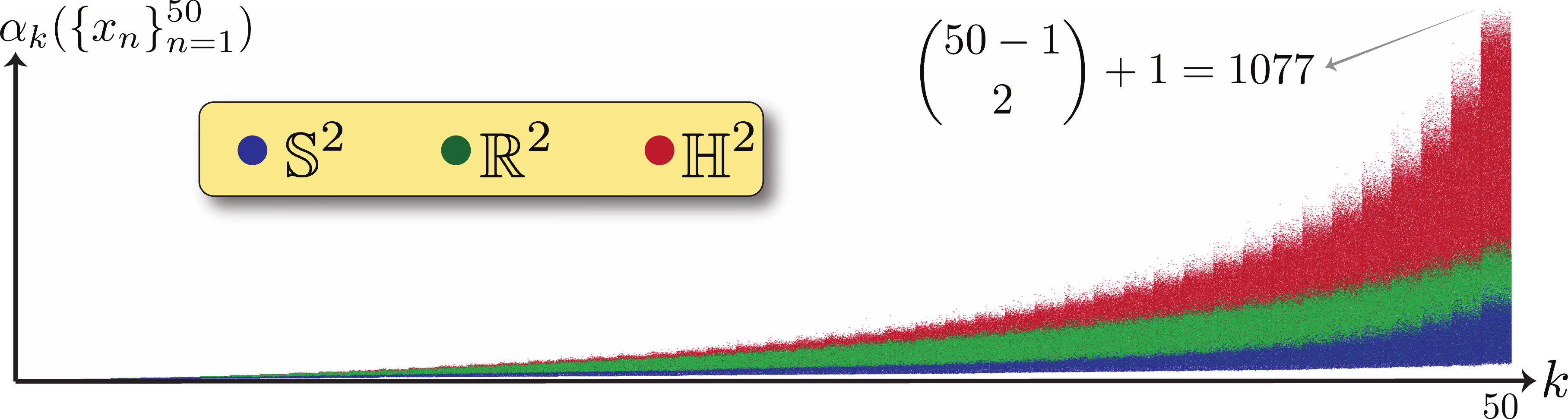}
  \caption{The $k$-th ordinal spread of $10^{5}$ randomly generated points $\set{x_n}_{n=1}^{50}$ in $2$-dimensional space forms. }
  \label{fig:ordinal_test}
\end{figure}
\newpage
\subsection{Proof of Theorem 1} \label{sec:ordinal_capacity_number}
Let us separately consider hyperbolic, Euclidean, and spherical spaces. 
\subsubsection{Hyperbolic space} 
Let $r \in \R^{+}$, and $x_1(r), \ldots, x_{N}(r) \in \bL^d$ be a set of parameterized points in 'Loid model of $d$-dimensional hyperbolic space, such that 
\[
\forall n \in [N]: x_n(r) = \begin{bmatrix} \sqrt{1+\norm{y_n(r)} ^2} \\ y_n(r)  \end{bmatrix}, 
\]
where $y_N(r) = 0$, and $ y_{i}(r)^\T y_{j}(r) = r^2 \cos{ 2\pi \frac{|i-j|}{N-1}}, \forall i,j \in [N-1]$. To see an example, see \Cref{fig:maximum_spread_hyperbolic}.
 \begin{figure}[b]
\center
  \includegraphics[width=.6 \linewidth]{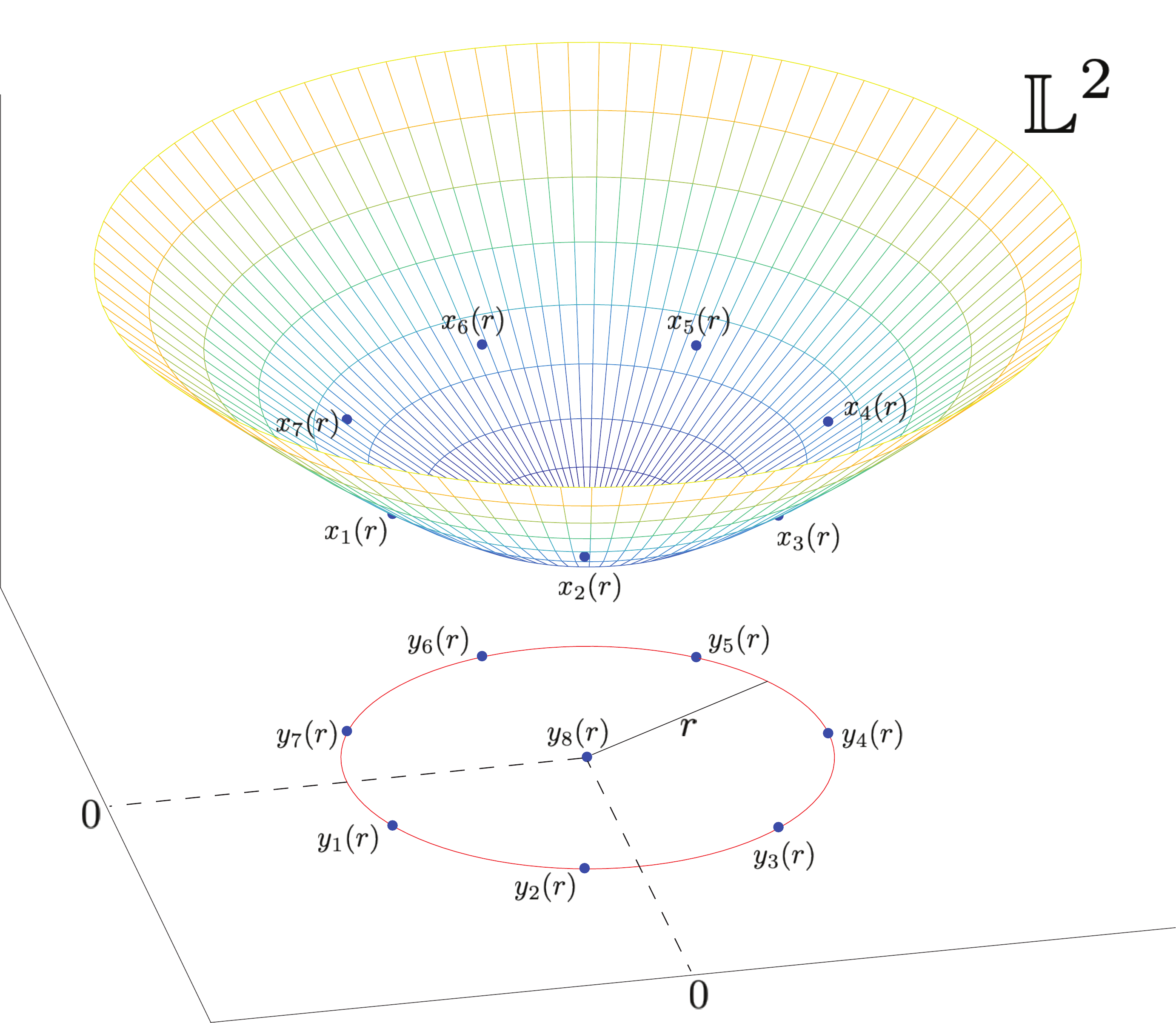}
  \caption{An example of $N=8$ parameterized points $\set{x_n(r)}_{n=1}^{N}$ in $\bL^2$ and  $\set{y_n(r)}_{n=1}^{N}$ in $\R^2$. }
  \label{fig:maximum_spread_hyperbolic}
\end{figure}
For these data points, we have
\begin{align*}
d_{\min} \left(\set{x_n(r)}_{n=1}^{N-1} \right) &= \acosh \left( 1+r^2(1-\cos \frac{2\pi}{N-1} ) \right), \\
d_{\max} \left(\set{x_n(r)}_{n=1}^{N-1}, x_N(r) \right)&= \acosh \left( \sqrt{1+r^2} \right).
\end{align*}
Therefore, for any $N \in \N$, there exists a $r \in \R^{+}$ such that $\set{x_n(r)}_{n=1}^{N} \sqsubseteq \bL^d$. Hence,
\begin{align*}
K(\bL^d) &= \sup \set{ N : \set{x_n(r)}_{n=1}^{N}  \sqsubseteq \bL^d } \\
&= \infty.
\end{align*}
This result hold for all dimensions $d \geq 2$.
\subsubsection{Euclidean space}
\begin{lemma}\label{lem:ord_dense_sets_R}
There is a set of points $x_1, \ldots,x_{N}$ in $\R^d$ such that 
\begin{equation*}
\forall n \in [N-1]: \norm{x_n-x_N} = 1,
\end{equation*}
where $d_{\max} (x_N, \set{x_n}_{n=1}^{N-1}) \leq d_{\min} (\set{x_n}_{n=1}^{N-1})$ and $N = K(\R^d)$.
\end{lemma}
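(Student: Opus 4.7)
The plan is to construct the required set explicitly, placing $x_N$ at the origin and distributing the remaining $N-1$ points on the unit sphere $\bS^{d-1} \subset \R^d$ so that their pairwise Euclidean distances are at least $1$. First I would translate the lemma's second condition into a geometric constraint on the sphere: writing $x_n = u_n$ with $\|u_n\| = 1$ and using the identity $\|u_m - u_n\|^2 = 2 - 2\langle u_m, u_n\rangle$, the requirement $\|u_m - u_n\| \geq 1$ becomes $\langle u_m, u_n\rangle \leq 1/2$, i.e., the angle between $u_m$ and $u_n$ is at least $\pi/3$. Equivalently, the open geodesic caps of angular radius $\pi/6$ centered at the $u_n$'s are pairwise disjoint.

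Next I would invoke Theorem~1 to fix the correct cardinality. Since $K(\R^d)$ equals the spherical $\pi/6$-cap packing number plus one, the target $N-1$ is precisely the maximum number of points admitting such a packing on the relevant sphere. The existence of an attaining configuration follows from compactness: the set of $(N-1)$-tuples of unit vectors with pairwise inner product at most $1/2$ is a closed subset of the compact product space $(\bS^{d-1})^{N-1}$, so any maximizing sequence has a convergent subsequence whose limit still satisfies the constraint. Let $(u_1, \ldots, u_{N-1})$ denote such an extremal tuple.

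Finally I would set $x_n = u_n$ for $n \in [N-1]$ and $x_N = 0$. The identity $\|x_n - x_N\| = 1$ is immediate from $\|u_n\| = 1$, while the preceding step gives $\|x_m - x_n\| \geq 1$ for all distinct $m, n \in [N-1]$. Together these imply $d_{\max}(x_N, \{x_n\}_{n=1}^{N-1}) = 1 \leq d_{\min}(\{x_n\}_{n=1}^{N-1})$, so the configuration is ordinally dense, and its cardinality $N$ equals $K(\R^d)$ by Theorem~1.

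The main obstacle is conceptual rather than technical: one must recognize that the ``unit distance to $x_N$'' normalization demanded by the lemma is compatible with the maximum achievable ordinal spread, i.e., that an extremal configuration can be chosen (after translation) to lie on a sphere centered at $x_N$. Once this alignment between the lemma's structural requirement and the cap-packing characterization of Theorem~1 is in place, the remaining work is a routine combination of compactness and the chord--angle identity on the sphere.
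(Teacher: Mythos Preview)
Your argument is circular. Lemma~\ref{lem:ord_dense_sets_R} is stated and proved \emph{inside} the proof of Theorem~1: it is precisely the device that reduces the computation of $K(\R^d)$ to a spherical $\pi/6$-cap packing problem, from which the identity $K(\R^d)=N_d+1$ is then derived. When you ``invoke Theorem~1 to fix the correct cardinality'' and assert that $N-1$ equals the cap packing number, you are assuming the conclusion that this lemma is supposed to help establish. Without Theorem~1, your construction only yields an ordinally dense set of size $N_d+1$, which gives $N_d+1\le K(\R^d)$; nothing in your argument rules out the existence of a \emph{larger} ordinally dense set that does not sit on a sphere centered at one of its points.

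The paper avoids this by arguing in the opposite direction: it starts from an \emph{arbitrary} ordinally dense set $\{y_n\}_{n=1}^{N}$ of size $N=K(\R^d)$ (translated so $y_N=0$ and scaled so $d_{\max}(y_N,\{y_n\})=1$), radially projects the first $N-1$ points onto the unit sphere via $x_n=y_n/\|y_n\|$, and shows algebraically that whenever $\|y_i-y_j\|\ge 1$ one also has $\|x_i-x_j\|\ge\|y_i-y_j\|$. Hence the projected set is still ordinally dense, and it has the unit-distance-to-$x_N$ property by construction. This step does not presuppose any formula for $K(\R^d)$; rather, it is what makes the cap-packing characterization (and hence Theorem~1) possible. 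To fix your proof, drop the appeal to Theorem~1 and supply a direct argument that any ordinally dense configuration of maximum size can be pushed onto a sphere about $x_N$ without losing ordinal density.
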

\begin{proof}
Let $\set{y_n}_{n=1}^{N}$ be a set of points in $\R^d$ such that 
\begin{equation*}
d_{\max} (y_N, \set{y_n}_{n=1}^{N-1}) \leq d_{\min} (\set{y_n}_{n=1}^{N-1}),
\end{equation*}
or $\alpha_N(\set{y_n}_{n=1}^{N}) = {N-1 \choose 2}+1$. Without loss of generality, we assume $y_N = 0$ and $d_{\max} (y_N, \set{y_n}_{n=1}^{N-1}) = 1$. Let $x_n = \frac{1}{\norm{y_n} }y_n, \ \forall n \in [N-1]$ and $x_N = y_N$. We want to show that $\alpha_N(\set{x_n}_{n=1}^{N}) \geq \alpha_N(\set{y_n}_{n=1}^{N})$. Following the definition of ordinal spread, we have
\begin{align*}
\alpha_{N} \left( \set{x_n}_{n=1}^{N} \right) & \stackrel{\mathrm{(a)}}{\geq}  \card \set{(i,j): d(x_{i},x_{j}) \geq d_{\max}(x_N, \set{x_n}_{n=1}^{N-1}), i, j \in [N-1], i > j } +1, \\
&\stackrel{\mathrm{(b)}}{=} \card  \set{(i,j): d(x_{i},x_{j}) \geq 1, i, j \in [N-1], i > j } +1, \\
&\stackrel{\mathrm{(c)}}{\geq} \card \set{(i,j): d(y_{i},y_{j}) \geq 1, i, j \in [N-1],i > j } +1, \\
&= \alpha_N(\set{y_n}_{n=1}^{N})
\end{align*}
where $\mathrm{(a)}$ holds with equality if $x_N$ appears last in the sorted distance list, i.e., if $x_N = x_{(N)}$, $\mathrm{(b)}$ is due to $d_{\max} (x_N, \set{x_n}_{n=1}^{N-1}) = 1 = d_{\max} (y_N, \set{y_n}_{n=1}^{N-1})$. To prove inequality $\mathrm{(c)}$, let $d(y_i, y_j) \geq 1$ for distinct $i, j \in [N-1]$. Then, 
\begin{align*}
d(y_i, y_j)^2 &= \frac{\norm{y_i}-1}{\norm{y_i}} \left( \norm{y_i-y_j}^2 - \norm{y_j}^2+\norm{y_i} \right) + \norm{\frac{1}{\norm{y_i}}y_i-y_j }^2 \\
&= \frac{d(y_N,y_i)-1}{\norm{y_i}} \left( d(y_i,y_j)^2 - d(y_N,y_j)^2+d(y_N,y_i) \right) + \norm{\frac{1}{\norm{y_i}}y_i-y_j }^2 \\
&\stackrel{\mathrm{(a)}}{\leq} \norm{\frac{1}{\norm{y_i}}y_i-y_j }^2 \\
&\stackrel{\mathrm{(b)}}{\leq} \norm{\frac{1}{\norm{y_i}}y_i-\frac{1}{\norm{y_j}}y_j }^2 \\
& = d(x_i, x_j)^2
\end{align*}
where $\mathrm{(a)}$ follows from $d(y_N,y_i) \leq 1$, $d(y_N,y_j) \leq 1$,  $d(y_i,y_j)^2 \geq 1$, and $\mathrm{(b)}$ follows from the symmetry in the argument. Therefore, we have
\[
\set{(i,j): d(y_{i},y_{j}) \geq 1, i,j \in [N-1],i > j }   \subseteq  \set{(i,j): d(x_{i},x_{j}) \geq 1, i,j \in [N-1], i > j }.
\]
Hence, $\set{x_n}_{n=1}^{N}$ is an ordinally dense subset of $\R^d$.
\end{proof}
From \Cref{lem:ord_dense_sets_R}, we want find an ordinally dense set of points $x_1, \ldots,x_{N}$ in $\R^d$ such that
\[
\forall n \in [N-1]: \norm{x_{n}} = 1, \mbox{ and } x_N = 0.
\]
From the definition of ordinal spread, we have
\begin{align*}
\alpha_N(\set{x_n}_{n=1}^{N}) &= \card  \set{(i,j): d(x_{i},x_{j}) \geq d_{\max}(x_N, \set{x_n}_{n=1}^{N-1}),i,j \in [N-1], i > j } +1,\\
&= \card \set{(i,j): \norm{x_i}^2+\norm{x_j}^2-2x_{i}^\T x_{j} \geq 1^2, i,j \in [N-1], i > j } +1,\\
&= \card \set{(i,j):  \acos (x_{i}^\T x_{j}) \geq \frac{\pi}{3}, i,j \in [N-1],i > j } +1.
\end{align*}
We can find a maximum number of ordinally dense points by solving a spherical cap packing problem; see \Cref{fig:cap_packing_R}. 
 \begin{figure}[b]
\center
  \includegraphics[width=.25 \linewidth]{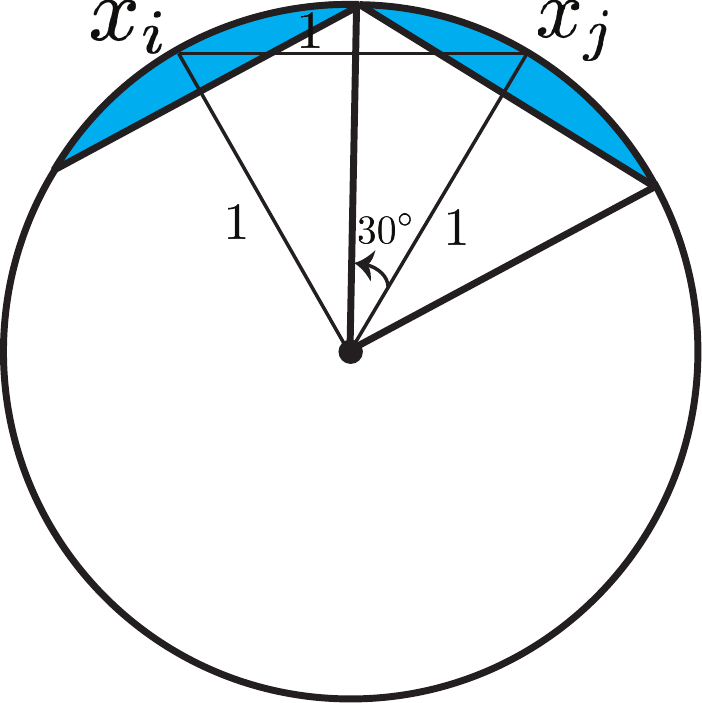}
  \caption{Spherical $\frac{\pi}{6}$-cap packing on the surface of a unit sphere $\S^1$.}
  \label{fig:cap_packing_R}
\end{figure}

\begin{definition}
Let $\S^{d-1}$ be the $(d-1)$-dimensional unit sphere in $\R^d$. We define the spherical $\alpha$-cap $C_x(\alpha)$ as
\begin{equation*}
C_x(\alpha) = \set{y \in \S^{d-1}: x^\T y < \cos(\alpha)},
\end{equation*}
for any $x \in  \S^{d-1}$.
\end{definition}

\begin{definition}
The maximum number of non-overlapping $C_x(\alpha)$ is defined as
\[
N(\alpha) = \max_{N \in \N} \set{N:   \exists x_1, \ldots, x_N \in \S^{d-1} \mbox{such that }\bigcup_{j \in \mathcal{I}, j \neq i} C_{x_j}(\alpha) \cap C_{x_i}(\alpha)  = \emptyset, \forall \mathcal{I} \subseteq [N], \forall i \in [N] }.
\]
\end{definition}
Therefore, we have
\begin{align*}
K(\R^d) &= \sup \set{ \card \set{x_n} :  \set{x_n} \sqsubseteq \R^d},\\
& =\sup \set{ N: x_1, \ldots , x_N \in \R^d, \alpha_N \left( \set{x_n}_{n=1}^{N} \right) = {N-1 \choose 2}+1}, \\
& =\sup \set{ N: x_1, \ldots , x_N \in \R^d, \card  \set{(i,j):  \acos (x_{i}^\T x_{j}) \geq \frac{\pi}{3}, i,j \in [N-1], i > j }= {N-1 \choose 2}},\\
& =\sup \set{ N: x_1, \ldots , x_N \in \R^d\mbox{ such that } \acos (x_{i}^\T x_{j}) \geq \frac{\pi}{3}, i, j \in [N], i \neq j }+1,\\
&\stackrel{\mathrm{(a)}}{=} N(\frac{\pi}{6})+1, \\
&\stackrel{\mathrm{(b)}}{\leq}  \floor*{ \sqrt{\frac{\pi}{8}}\frac{\Gamma\left( \frac{d-1}{2} \right)}{\Gamma \left( \frac{d}{2}\right) \int_{0}^{\frac{\pi}{4}} \sin^{d-2} \theta \left( \cos \theta -\frac{\sqrt{2}}{2} \right) d \theta } }+1,
\end{align*}
where $\mathrm{(a)}$ follows from a simple illustration in \Cref{fig:cap_packing_R}, and $\mathrm{(b)}$ is given in \cite{rankin1955closest}. For large $d$, Rankin provided the following approximation,
\[
N(\alpha) \sim \frac{(\frac{1}{2} \pi d^3 \cos{2\alpha})^{\frac{1}{2}}}{(\sqrt{2} \sin{\alpha})^{d-1} }.
\]
Therefore, we have $N(\frac{\pi}{6}) \sim \sqrt{\pi} d^{\frac{3}{2}} 2^{\frac{d-3}{2}} = C \exp \set{ -d\log{ \frac{\sqrt{2}}{2} } +\frac{3}{2}\log{d} }$ for some constant $C$. The maximum number of non-overlapping spherical caps of half angle $\theta$ which can be placed on the unit sphere in $\R^d$ is not less than $\exp(-d \log\sin 2\theta + o(d))$ \cite{wyner1967random}. Therefore, the lower bound on $N(\frac{\pi}{6})$ is given by $\exp( -d\log \frac{\sqrt{3}}{2} + o(d))$. 

\subsubsection{Spherical space} \label{sec:spherical_space}
\begin{lemma}\label{lem:ord_dense_sets_S}
There is a set of points $x_1, \ldots,x_{N}$ in $\S^d$ such that 
\[
\forall n \in [N-1]: d \left( x_n, x_N \right) = \acos \left( 1-\epsilon \right),
\]
where $d_{\max} (x_N, \set{x_n}_{n=1}^{N-1}) \leq d_{\min} (\set{x_n}_{n=1}^{N-1})$, $N = K(\S^d)$, and for some $\epsilon \geq 0$.
\end{lemma}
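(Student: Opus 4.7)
The plan is to mirror the proof of the Euclidean analogue, \Cref{lem:ord_dense_sets_R}, replacing radial rescaling in $\R^d$ with radial projection along geodesics on $\bS^d$. I would start with an ordinally dense set $\set{y_n}_{n=1}^N \sqsubseteq \bS^d$ of size $N=K(\bS^d)$ in which $y_N$ plays the role of the distinguished center $n_0$ from \Cref{def:ordinal_capacity}, and push each $y_n$ (for $n<N$) outward along its geodesic from $y_N$ to the common geodesic distance $R := d_{\max}(y_N, \set{y_n}_{n=1}^{N-1})$, producing $x_n$; set $x_N = y_N$. Choosing $\epsilon = 1-\cos R$ immediately yields $d(x_n, x_N) = R = \acos(1-\epsilon)$ for all $n\in[N-1]$, so the only remaining task is to verify that the projection weakly increases all pairwise distances among the non-center points, which propagates $d(y_m, y_n) \geq R$ to $d(x_m, x_n) \geq R$ and preserves ordinal density.

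Writing $y_n = \cos\theta_n \cdot y_N + \sin\theta_n \cdot u_n$ in extrinsic coordinates in $\R^{d+1}$, with $\theta_n = d(y_n, y_N) \in (0,R]$ and $u_n$ a unit tangent vector at $y_N$, the projection reads $x_n = \cos R \cdot y_N + \sin R \cdot u_n$. Applying the spherical law of cosines to the triangles $(y_N, y_m, y_n)$ and $(y_N, x_m, x_n)$, which share the same apex angle $C$ at $y_N$ with $\cos C = \langle u_m, u_n\rangle$, gives
\begin{align*}
\cos d(y_m, y_n) &= \cos\theta_m\cos\theta_n + \sin\theta_m\sin\theta_n \langle u_m, u_n\rangle, \\
\cos d(x_m, x_n) &= \cos^2 R + \sin^2 R \langle u_m, u_n\rangle.
\end{align*}
The target $\cos d(x_m, x_n) \leq \cos d(y_m, y_n)$ rearranges to $\langle u_m, u_n\rangle \cdot (\sin^2 R - \sin\theta_m\sin\theta_n) \leq \cos\theta_m\cos\theta_n - \cos^2 R$, in which both sides are nonnegative as soon as $\theta_m, \theta_n \leq R \leq \pi/2$.

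I would split on the sign of $\langle u_m, u_n\rangle$. The case $\langle u_m, u_n\rangle \leq 0$ is immediate. For $\langle u_m, u_n\rangle > 0$ I would substitute the upper bound $\langle u_m, u_n\rangle \leq (\cos R - \cos\theta_m\cos\theta_n)/(\sin\theta_m\sin\theta_n)$ implied by the ordinal-density hypothesis $\cos d(y_m, y_n) \leq \cos R$, and then use the identities $\sin^2 R = (1-\cos R)(1+\cos R)$ and $\cos(\theta_m-\theta_n) = \cos\theta_m\cos\theta_n + \sin\theta_m\sin\theta_n$ to reduce the target to
\[
\cos R \cdot \bigl[\,1 + \cos R - \cos(\theta_m-\theta_n)\,\bigr] \leq \cos\theta_m\cos\theta_n.
\]
Setting $\alpha = |\theta_m - \theta_n|$ and $\Sigma = \theta_m+\theta_n$, the worst case for this inequality is $\Sigma = 2R - \alpha$, so it suffices to verify that $g(\alpha) := \cos(2R - \alpha) + (1+2\cos R)\cos\alpha - 2\cos R(1+\cos R)$ is nonnegative on $[0,R]$. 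A short computation gives $g(0) = g(R) = 0$ and $g''(\alpha) = -\cos(2R - \alpha) - (1+2\cos R)\cos\alpha \leq 0$ whenever $R \leq \pi/2$, so $g$ is concave on $[0,R]$ with vanishing endpoints, hence nonnegative.

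The main obstacle is precisely this spherical trigonometric reduction, which, unlike the clean norm computation in the Euclidean case, crucially requires the curvature-aware restriction $R \leq \pi/2$, i.e.\ $\epsilon \leq 1$. To guarantee a starting ordinally dense set of size $K(\bS^d)$ whose circumscribing radius satisfies this bound, I would appeal to the fact that a sufficiently small geodesic ball on $\bS^d$ carries an intrinsic metric arbitrarily close to Euclidean: the ordinally dense configuration in $\R^d$ supplied by \Cref{lem:ord_dense_sets_R} can be transplanted to $\bS^d$ via the exponential map at any base point, and for small enough radius the strict ordinal-density inequalities survive the resulting small metric perturbation, giving the desired starting configuration.
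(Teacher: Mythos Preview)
Your strategy matches the paper's: start with an ordinally dense $\{y_n\}\sqsubseteq\bS^d$ centered at $y_N$ and push each non-center point outward along its geodesic from $y_N$ to the common radius $R$. The paper carries this out in extrinsic coordinates $y_n=(\sqrt{1-\|z_n\|^2},z_n)$ with $y_N=e_1$, while you work intrinsically via the spherical law of cosines, but it is the same map. Where the two arguments diverge is the core inequality $d(x_m,x_n)\ge d(y_m,y_n)$: the paper disposes of it by asserting (its step (b)) that the tangent directions always satisfy $\cos\theta_{ij}\le 0$, after which the inequality is a one-liner, whereas you split on the sign of $\langle u_m,u_n\rangle$ and handle the acute case via the concavity of $g$ on $[0,R]$. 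Your route is in fact the more careful one, since the paper's obtuseness claim is not true in general (when $\|z_i\|=\|z_j\|=\sqrt{1-(1-\epsilon)^2}$ one only gets $\cos\theta_{ij}\le\frac{1-\epsilon}{2-\epsilon}\to\tfrac12$, not $\le 0$); the acute case genuinely needs the extra work you put in, and your concavity computation is correct.

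There is one gap in your final paragraph. Transplanting the extremal Euclidean configuration of \Cref{lem:ord_dense_sets_R} via $\exp_p$ does not survive the non-strict equalities that occur at the packing optimum. For the regular hexagon in $\R^2$ the adjacent side equals the circumradius, but on $\bS^2$ the spherical law of cosines gives $\cos d=\cos^2 t+\tfrac12\sin^2 t>\cos t$ for $t>0$, so the transplanted adjacent side is strictly \emph{shorter} than the spoke and ordinal density is lost for every $t>0$; positive curvature pushes the borderline pairs the wrong way. A cleaner way to secure $R\le\pi/2$ is to observe that it is forced once $N$ is large enough: if $R>\pi/2$ then the $N-1$ non-center points are unit vectors in $\R^{d+1}$ with pairwise strictly negative inner products, hence $N-1\le d+2$. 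Thus any ordinally dense set of size exceeding $d+3$ already has $R\le\pi/2$, and your projection applies to it directly without any transplant.
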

\begin{proof}
Let $\set{y_n}_{n=1}^{N}$ be a set of points in $\S^d$ such that 
\[
d_{\max} (y_N, \set{y_n}_{n=1}^{N-1}) \leq d_{\min} (\set{y_n}_{n=1}^{N-1}),
\]
or $\alpha_N(\set{y_n}_{n=1}^{N}) = {N-1 \choose 2}+1$. Without loss of generality, we assume $\alpha_N(\set{y_n}_{n=1}^{N}) = {N-1 \choose 2}+1$, $y_N = e_1$, \footnote{$e_1$ is the first standard base vector for $\R^{d+1}$.} and $d_{\max} (y_N, \set{y_n}_{n=1}^{N-1}) = \acos \left( 1-\epsilon \right)$. From the latter condition, we have 
\[
y_n \bydef 
\begin{bmatrix} 
\sqrt{1-\norm{z_n}^2}  \\
z_n  
\end{bmatrix}, \mbox{ such that } \norm{z_n} \leq \sqrt{1-(1-\epsilon)^2}.
\]
Let us define
\[
\forall n \in [N-1]: x_n = \begin{bmatrix} 
1-\epsilon \\
\sqrt{1-(1-\epsilon)^2} \frac{1}{\norm{z_n}}z_n
\end{bmatrix}, 
\]
and $x_N = e_1$. Then, we claim $\alpha_N(\set{x_n}_{n=1}^{N}) \geq \alpha_N(\set{y_n}_{n=1}^{N})$. Following the definition of ordinal spread, we have
\begin{align*}
\alpha_N(\set{x_n}_{n=1}^{N}) &\stackrel{\mathrm{(a)}}{=} \card \set{(i,j): d(x_{i},x_{j}) \geq d_{\max}(x_N, \set{x_n}_{n=1}^{N-1}), i,j \in [N-1], i > j } +1, \\
&\stackrel{\mathrm{(b)}}{=} \card  \set{(i,j): d(x_{i},x_{j}) \geq \acos \left( 1-\epsilon \right), i,j \in [N-1], i > j } +1, \\
&\stackrel{\mathrm{(c)}}{\geq} \card \set{(i,j): d(y_{i},y_{j}) \geq \acos \left( 1-\epsilon \right), i,j \in [N-1], i > j } +1, \\
&= \alpha_N(\set{y_n}_{n=1}^{N}),
\end{align*}
where $\mathrm{(a)}$ holds with equality if $x_N$ appears last in the sorted distance list, $\mathrm{(b)}$ is due to $d_{\max} (x_N, \set{x_n}_{n=1}^{N-1}) = \acos \left( 1-\epsilon \right) = d_{\max} (y_N, \set{y_n}_{n=1}^{N-1})$. For inequality $\mathrm{(c)}$, let $d(y_i, y_j) \geq \acos \left( 1-\epsilon \right)$ for distinct $i, j \in [N-1]$ and $z_i^\T z_j = \norm{z_i} \norm{z_j} \cos \theta_{ij}$. Therefore,
\begin{align*}
\cos \theta_{ij} &= \frac{1}{\norm{z_i} \norm{z_j} } z_i^\T z_j  \\
&\stackrel{\mathrm{(a)}}{\leq} \frac{1}{\norm{z_i} \norm{z_j} } \left( 1-\epsilon - \sqrt{1-\norm{z_i}^2} \sqrt{1-\norm{z_j}^2} \right) \\
&\stackrel{\mathrm{(b)}}{\leq} 0.
\end{align*}
where $\mathrm{(a)}$ is due to 
\[
y_i^{\T}y_j = \sqrt{1-\norm{z_i}^2} \sqrt{1-\norm{z_j}^2}+z_i^\T z_j \leq 1-\epsilon,
\]
and inequality $\mathrm{(b)}$ is due $ \sqrt{1-\norm{z_i}^2}  \geq \sqrt{1-\sqrt{1-(1-\epsilon)^2}^2} =\sqrt{1-\epsilon^2}$. \footnote{Similarly, we have  $\sqrt{1-\norm{z_j}^2}  \geq \sqrt{1-\epsilon^2}$.}  Then, we have
\begin{align*}
d(x_i,x_j) &= \acos \left( (1-\epsilon)^2 +(1-(1-\epsilon)^2)\cos \theta_{ij} \right) \\
&\geq \acos \left( \sqrt{1-\norm{z_i}^2}\sqrt{1-\norm{z_j}^2} + z_i^\T z_j \right) \\
&= d(y_i,y_j)
\end{align*}
since $(1-(1-\epsilon)^2)\cos \theta_{ij} \leq \norm{z_i}\norm{z_j} \cos \theta_{ij}$ where $\cos \theta_{ij} \leq 0$. Therefore, we have
\[
\set{(i,j): d(y_{i},y_{j}) \geq \acos \left( 1-\epsilon \right), i,j \in [N-1], i > j }   \subseteq   \set{(i,j): d(x_{i},x_{j}) \geq \acos \left( 1-\epsilon \right), i,j \in [N-1], i > j }.
\]
Hence, $\set{x_n}_{n=1}^{N}$ is an ordinally dense subset of $\S^d$.
\end{proof}
Now, let us find ordinally dense set of points $x_1, \ldots,x_{N}$ in $\S^d$ with 
\[
\forall n \in [N-1]: x_n = \begin{bmatrix} 
1-\epsilon  \\
z_n  
\end{bmatrix},  \mbox{ and } x_N = e_1.
\]
We have $\norm{z_n}^2 = 1- (1-\epsilon)^2$ for all $\forall n \in [N-1]$. We begin from the definition of ordinal spread as follows
\begin{align*}
\alpha_N(\set{x_n}_{n=1}^{N}) &= \card \set{(i,j): d(x_{i},x_{j}) \geq d_{\max}(x_N, \set{x_n}_{n=1}^{N-1}), i,j \in [N-1], i > j } +1,\\
&= \card  \set{(i,j): d(x_{i}, x_{j}) \geq \acos(1-\epsilon), i,j \in [N-1], i > j } +1,\\
&= \card \set{(i,j): \frac{1}{\norm{z_i}\norm{z_j}}z_{i}^\T z_{j} \leq \frac{\epsilon(1-\epsilon)}{1-(1-\epsilon)^2}, i,j \in [N-1], i > j } +1,\\
&= \card \set{(i,j):  \acos (\widehat{z}_i^\T \widehat{z}_j)\geq \frac{\pi}{3}, i,j \in [N-1], i > j } +1,\\
\end{align*}
where $\widehat{z}_i = \frac{1}{\norm{z_i}}z_i$, $\widehat{z}_j = \frac{1}{\norm{z_j}}z_j$, and $\sup_{\epsilon}  \frac{\epsilon(1-\epsilon)}{1-(1-\epsilon)^2} = \frac{1}{2}$. Similar to the Euclidean space, this problem is equivalent to spherical $\frac{\pi}{6}$-cap packing number in $\R^{d}$, since $\widehat{z}_n \in \R^{d}$. Finally, if  we assume $\min_{i,j \in [N], i>j} d(x_i,x_j) = \delta$, we have $d_{\max}(x_N, \set{x_n}_{n=1}^{N-1}) \geq \delta$. Therefore, the cap angles can be computed as follows
\[
\alpha = \min_{ \epsilon \geq 1-\cos \delta } \frac{1}{2} \acos  \frac{\epsilon(1-\epsilon)}{1-(1-\epsilon)^2} = \frac{1}{2} \acos \frac{\cos \delta }{1+\cos \delta } > \frac{\pi}{6}.
\]
In this case, the ordinal capacity can be refined as spherical $\alpha$-cap packing number.
\subsection{Proof of Theorem 2} \label{sec:space_form_n_point_curvature}
Let $S$ be a $d$-dimensional space form, and $N \leq K(S)$. From Definition 4, we can find an ordinally dense subset $x_1, \ldots, x_N \in S$. Hence, we have
\begin{align*}
A_N(S) &= \sup_{x_1, \ldots, x_N \in S} \alpha_{N} \left( \set{x_n}_{n=1}^{N} \right), \\
&\stackrel{\mathrm{(a)}}{=} {N-1 \choose 2}+1
\end{align*}
where $\mathrm{(a)}$ directly follows from Proposition 1. This is the number of edges of a complete graph with $N-1$ vertices plus one.

Now, let us consider $N > K(S)$. This could only happen in ($d$-dimensional) Euclidean and spherical spaces, since hyperbolic spaces have infinite ordinal capacity, i.e., $K(\bH^d) = \infty$. 

In \Cref{sec:ordinal_capacity_number}, we proved that there is a set of points $x_1, \ldots, x_{N-1} \in \R^d$ on the unit sphere and $x_N = 0$ such that
\begin{align*}
A_N(S) &= \alpha_N \left( \set{x_n}_{n=1}^N \right), \\
&= \card \set{(i,j): d(x_i,x_j) \geq 1, i,j \in [N-1], i>j}+1.
\end{align*}

Consider a pair of points $x_i , x_j \in \R^d$ with $d(x_i , x_j) < 1$. We can move the point $x_i$ and place it on $x_j$ if 
\[
\card \set{(i,k): d(x_i,x_k) \geq 1, i,k \in [N-1], i \neq k} \leq \card \set{(j,k): d(x_j,x_k) \geq 1, j,k \in [N-1], j \neq k}.
\]
This condition is to ensure that we do not decrease $\alpha_N \left( \set{x_n}_{n=1}^N \right)$. We repeat this process and lump the set of $N-1$ point on $K < N-1$ positions, i.e., $p_1, \ldots, p_K$. At each position $p_k$, we place multiple vertices. Finally, $\alpha_N \left( \set{x_n}_{n=1}^N \right)$ is equal to the number of edges -- with length greater than $1$ -- in this $K$-partite graph with $N-1$ vertices. This graph is $K$-partite because the distance between points in a partition have distances of zero. Hence, their edges do not contribute in calculating the ordinal spread of the point set. This graph becomes a complete $K$-partite graph if all distinct positions $\set{p_k}$ belong to the centers of spherical $\frac{\pi}{6}$-caps on the unit sphere. On the other hand, the number of edges in a complete $K$-partite graph is maximized when the size of the parts differs by at most one, i.e., Tur\'an graph $T(N-1,K)$ \cite{turan1941external}. Therefore, the $N$-point ordinal spread of $S$ (Euclidean or spherical space) is given by
\[
A_N (S) = \card \ E (T(N-1, K(S)-1))+1.
\]
The maximum number of possible partitions ($K(S)-1$) gives the maximum number of edges, i.e.,
\[
\card \ E (T(N-1,1)) \leq \card \ E (T(N-1,2))\leq  \cdots \leq \card\  E(T(N-1,K(S)-1)).
\]
This completes the proof.
\subsection{Proof of Proposition 2}
The proof follows from the definition of $A_N(S)$, the $N$-point ordinal spread of a space form $S$, in Theorem~2.
\section{NUMERICAL EXPERIMENTS}
All our experiments were conducted on a Dual-Core Intel Core i5 Mac machine, 16GB of system memory.
\subsection{Datasets}
In this paper, we used cartographic data (counties in the state of Illinois, counties in Midwestern states, \footnote{The US Zip Code Latitude and Longitude data is available at \href{https://public.opendatasoft.com/explore/dataset/us-zip-code-latitude-and-longitude/}{https://public.opendatasoft.com/explore/dataset/us-zip-code-latitude-and-longitude/}).} and cities and towns across the world \footnote{World cities database is available at \href{https://simplemaps.com/data/world-cities}{https://simplemaps.com/data/world-cities}.}) and single-cell RNA expression data \footnote{We used four data sets: \href{https://www.10xgenomics.com/resources/datasets/hodgkins-lymphoma-dissociated-tumor-targeted-compare-immunology-panel-3-1-standard}{Lymphoma patient}, \href{https://www.10xgenomics.com/resources/datasets/pbm-cs-from-a-healthy-donor-targeted-immunology-panel-3-1-standard}{healthy donor}, \href{https://www.nature.com/articles/ncomms14049}{blood cells landmark}, and \href{https://shiny.mdc-berlin.de/psca/}{Planaria single cell atlas}. }  \cite{zheng2017massively,Hodgkin2020Lymphoma,PBMCs2020HealthyDonor,plass2018cell} which are publicly available datasets. Details of the single-cell expressions datasets are as follows:
\begin{enumerate}
\item \emph{Lymphoma patient.} Human dissociated lymph node tumor cells of a 19-year-old male Hodgkin’s Lymphoma patient were obtained by 10x Genomics from Discovery Life Sciences. Whole transcriptome libraries were generated with Chromium Next GEM Single Cell 3' Reagent Kits v3.1 (Dual Index) User Guide (CG000315) and sequenced on an Illumina NovaSeq 6000. The targeted libraries were generated using the Targeted Gene Expression Reagent Kits User Guide (CG000293) and Human Immunology Panel reagent (PN-1000246) and sequenced on an Illumina NovaSeq 6000.
\item \emph{Lymphoma-healthy donor.}  Human peripheral blood mononuclear cells (PBMCs) of a healthy female donor aged 25 were obtained by 10x Genomics from AllCells. Whole transcriptome libraries were generated with Chromium Next GEM Single Cell 3' Reagent Kits v3.1 (Dual Index) User Guide (CG000315) and sequenced on an Illumina NovaSeq 6000.  The aforementioned two datasets have 13410 samples (combined) and each for a class (binary classification). The dimension of each cell expression vector is 1020.
\item \emph{Blood cells landmark.}  We use the dataset originally from this paper and extract the gene expression data for (1) B cells, (2) Memory T cells, and (3) Native T cells. The complete dataset has $94655$ samples, and the dimension of each cell expression vector is $965$.
\item We use the single-cell RNA sequencing atlas provided in \cite{plass2018cell}. This atlas contains $26000$ cell expression vectors for adult planarians. Each cell is a $21000$-dimensional integer-valued vector representing read counts of gene expressions. Therefore, this raw data reside in a $21000$-dimensional Euclidean space.
\end{enumerate}
{\bf Imputations.} Existing methods for denoising and imputation of raw scRNA-seq data often involve building connection graphs among cells~\cite{li2018accurate, van2017magic} using the distance between cells to diffuse the expression profiles among neighbor cells and smooth out possible outliers. In our experiment we used MAGIC~\cite{van2017magic} to impute our raw sequencing data with different number of neighbors and steps in the diffusion process to get different level of imputation results. 

{\bf RFA score.} For datasets $(1-3)$, we construct the $5$-nearest neighbbor graph, and set the kernel width ($\sigma$) to have an (soft) average of $3$ neighbors; see Section 2.3 for more detail on computing RFA scores.

\subsection{Hyperbolicity of Trees}
 \begin{figure}[b]
\center
  \includegraphics[width=1 \linewidth]{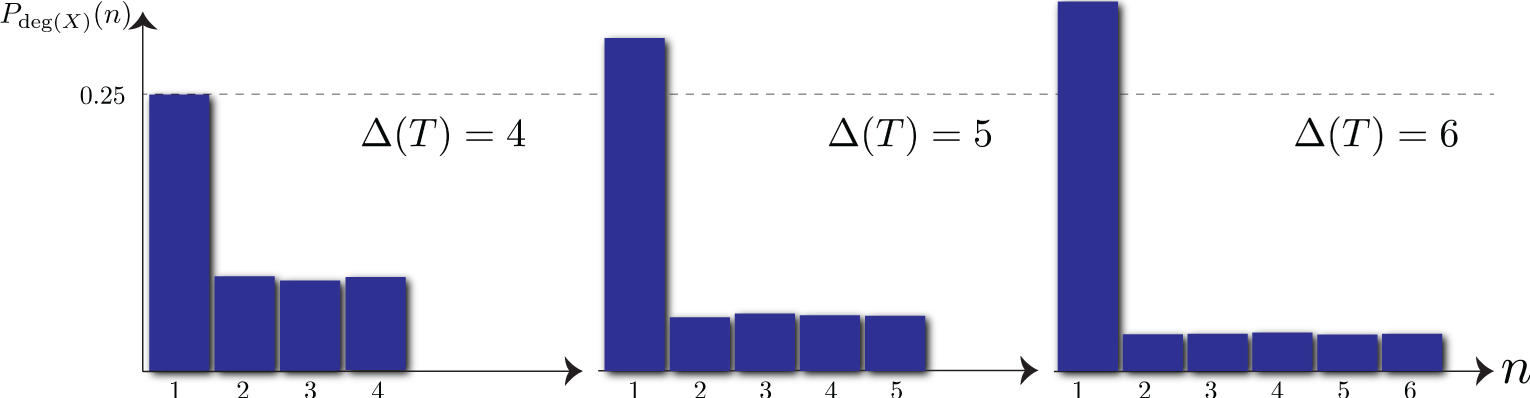}
  \caption{The distribution of node degrees for each random tree.}
  \label{fig:node_degree}
\end{figure}
We generate random weighted trees with $N = 10^4$ nodes. The edge weights are drawn from i.i.d. uniform distribution in $[0,1]$. The distance between each two nodes is the weight of the path joining them. We contaminate the corresponding distance matrix by an additive zero mean Gaussian noise with the signal to noise ratio of $40 \ \mathrm{dB}$. In this experiment, we consider three different trees with maximum degrees of $4,5,6$. \footnote{In the main manuscript, we only considered a binary tree with $\Delta(T) = 3$.} In \Cref{fig:node_degree}, we show the distribution of node degrees for each tree.

We generate random points in space forms of dimension $d = 2, \ldots, 5$, from the following distributions
\begin{itemize}
\item Hyperbolic space: $x = [\sqrt{1+\norm{z}^2}, z^\T]^\T$, where $z \sim \mathcal{N}(0,\sigma^2 I)$ and $\sigma = 100$;
\item Euclidean space: $x \sim \mathcal{N}(0,\sigma^2 I)$; 
\item Spherical space: $x = \frac{1}{\norm{z}} z$, where $z \sim \mathcal{N}(0, I)$. \footnote{Therefore, the points are distributed uniformly on $\bS^d$.}
\end{itemize}

Commonly, in embedding trees, the leaves concentrate near the boundary of the Poincar\'e disk. Hence, we choose a large variance $\sigma$ to heavily sample the points closer to the boundary of Poincar\'e disk. Finally, we devise a hypothesis test based on the total variation distance of probability measures, \footnote{The total variation distance is $\delta(P,Q) =\frac{1}{2}\norm{P-Q}_{1}$, but we can ignore the constant term. } i.e.,
\[
\delta(P,Q) =\norm{P-Q}_{1}.
\]
For each tree $T$ with $\Delta(T) = 4,5$ and $6$, we report the distances between the target (oracle) and empirical probability mass functions (PMF) of $\alpha_{k}$ for a set of $N$ points generated in each space form. In \Cref{tab:tab1,tab:tab2,tab:tab3}, we consider sub-cliques --- randomly sampled from each tree --- with $N=20$ nodes. From the hypothesis tests for $\alpha_{k}$, $k \in \set{3, \ldots, 20}$, we conclude that the ordinal spread variables of random trees better match with hyperbolic ordinal spread variables.
\begin{table}[H]  
\caption{$\delta( P_{\alpha_{k}}, \widehat{P}_{\alpha_{k}}) \times 10^{-3}$ for different space forms --- $\Delta(T) = 4$.}
\centering
\setlength{\tabcolsep}{3pt}
\begin{tabular}{l cccccccccccccccccc} 
\toprule
$k$ 		&$3$&$4$&$5$&$6$&$7$&$8$&$9$ & $10$ &	 $11$ & $12$ &$13$& $14$ &$15$& $16$ &$17$& $18$ &$19$ & $20$\\ \hline
$\bH^2$&$0$&$1.6$&\cellcolor{Gray}$1.9$&\cellcolor{Gray}$1.7$&\cellcolor{Gray}$1.8$&\cellcolor{Gray}$2.1$&\cellcolor{Gray}$2.4$&\cellcolor{Gray}$2.5$&\cellcolor{Gray}$2.3$&\cellcolor{Gray}$2.1$&\cellcolor{Gray}$2.1$& \cellcolor{Gray}$2.0$&\cellcolor{Gray}$2.0$&\cellcolor{Gray}$2.1$&\cellcolor{Gray}$2.2$& \cellcolor{Gray}$2.3$&$2.4$&$2.2$\\
$\bH^3$&$0$&$2.3$&$2.8$&$2.5$&$2.4$&$2.5$&$2.9$&$3.0$&$2.9$&$2.6$&$2.5$& $2.3$&$2.2$&$2.0$ &$1.9$&$1.8$&\cellcolor{Gray}$1.8$&$1.7$\\ 
$\bH^4$&$0$&$2.7$&$3.3$&$3.0$&$2.8$&$2.8$&$3.0$&$3.2$&$3.3$&$3.0$&$2.8$& $2.6$&$2.4$&$2.2$&$2.0$&$1.7$&$1.5$&\cellcolor{Gray}$1.3$\\ 
$\bH^5$&$0$&$3.0$&$3.6$&$3.4$&$3.0$&$3.1$&$3.2$&$3.4$&$3.5$&$3.3$&$3.0$& $2.8$&$2.6$&$2.3$&$2.1$&$1.8$&$1.5$&$1.1$\\
$\bE^2$&$0$&\cellcolor{Gray}$1.5$&$1.9$&$2.1$&$2.4$&$2.8$&$2.9$&$3.1$&$3.2$&$3.3$&$3.4$& $3.5$&$3.7$&$3.9$&$4.2$&$4.7$&$5.4$ &$6.7$\\  
$\bE^3$&$0$&$2.2$&$2.7$&$2.6$&$2.8$&$3.4$&$3.5$&$3.7$&$3.7$&$3.8$&$3.9$& $4.0$&$4.1$&$4.1$&$4.3$&$4.5$&$4.9$&$5.9$\\  
$\bE^4$&$0$&$2.6$&$3.2$&$3.1$&$3.1$&$3.6$&$3.8$&$3.9$&$4.0$&$4.1$&$4.2$&$4.3$& $4.3$&$4.3$&$4.3$&$4.4$&$4.7$&$5.4$\\  
$\bE^5$&$0$&$2.8$&$3.5$&$3.5$&$3.3$&$3.8$&$3.9$&$4.1$&$4.1$&$4.2$&$4.4$& $4.4$&$4.4$&$4.3$&$4.3$&$4.4$&$4.6$&$5.1$\\ 
$\bS^2$&$0$&$8.4$&$9.9$&$10.2$&$10.1$&$10$&$9.9$&$9.8$&$9.6$&$9.2$&$8.7$&$8.6$&$8.6$&$8.6$&$8.5$&$8.5$&$8.5$&$8.8$\\  
$\bS^3$&$0$&$8.4$&$9.8$&$10.2$&$10.1$&$10.1$&$10$&$9.9$&$9.6$&$9.3$&$8.8$&$8.7$&$8.7$&$8.7$&$8.7$&$8.7$&$8.7$&$8.8$\\  
$\bS^4$&$0$&$8.4$&$9.8$&$10.1$&$10.2$&$10.1$&$10$&$9.9$&$9.7$&$9.4$&$8.9$&$8.8$&$8.8$&$8.8$&$8.8$&$8.8$&$8.8$&$8.8$\\ 
$\bS^5$&$0$&$8.5$&$9.8$&$10.1$&$10.2$&$10.1$&$10.1$&$9.9$&$9.7$&$9.4$&$8.9$&$8.9$&$8.9$&$8.9$&$8.8$&$8.8$&$8.8$&$8.7$\\  \hline
\bottomrule
\end{tabular}
 \label{tab:tab1}
\end{table}
\begin{table}[h]  
\caption{$\delta( P_{\alpha_{k}}, \widehat{P}_{\alpha_{k}})\times 10^{-3}$ for different space forms --- $\Delta(T) = 5$.}
\centering
\setlength{\tabcolsep}{3pt}
\begin{tabular}{l cccccccccccccccccc} 
\toprule
$k$ 		&$3$&$4$&$5$&$6$&$7$&$8$&$9$ & $10$ &	 $11$ & $12$ &$13$& $14$ &$15$& $16$ &$17$& $18$ &$19$ & $20$\\ \hline
$\bH^2$&$0$&$1.6$&\cellcolor{Gray}$2.0$&\cellcolor{Gray}$2.6$&\cellcolor{Gray}$2.8$&\cellcolor{Gray}$2,8$&\cellcolor{Gray}$2.8$&\cellcolor{Gray}$2.6$&\cellcolor{Gray}$2.6$&\cellcolor{Gray}$2.5$&\cellcolor{Gray}$2.4$& \cellcolor{Gray}$2.2$&\cellcolor{Gray} $2.1$&\cellcolor{Gray}$1.9$&\cellcolor{Gray} $1.7$& \cellcolor{Gray}$1.4$&\cellcolor{Gray}$1.1$&\cellcolor{Gray}$0.7$\\
$\bH^3$&$0$&$2.3$&$2.8$&$3.1$&$3.4$&$3.5$&$3.5$&$3.5$&$3.3$&$3.3$&$3.1$& $3.0$&$2.8$&$2.6$ &$2.3$&$2.0$&$1.5$&$0.8$\\ 
$\bH^4$&$0$&$2.7$&$3.3$&$3.4$&$3.7$&$3.9$&$3.9$&$3.9$&$3.8$&$3.7$&$3.6$& $3.5$&$3.3$&$3.1$&$2.8$&$2.5$&$2.0$&$1.2$\\ 
$\bH^5$&$0$&$3.0$&$3.6$&$3.6$&$3.8$&$4.2$&$4.2$&$4.2$&$4.2$&$4.0$&$3.9$& $3.8$&$3.6$&$3.4$&$3.1$&$2.8$&$2.3$&$1.6$\\
$\bE^2$&$0$&\cellcolor{Gray}$1.5$&$2.3$&$3.4$&$3.6$&$4.0$&$4.2$&$4.4$&$4.6$&$4.8$&$5.0$&$5.3$&$5.6$&$5.9$&$6.3$&$6.9$&$7.5$ &$8.4$\\  
$\bE^3$&$0$&$2.2$&$2.7$&$3.8$&$4.2$&$4.6$&$4.8$&$5.0$&$5.1$&$5.2$&$5.4$& $5.6$ &$5.8$&$6.0$&$6.3$&$6.7$&$7.1$&$7.8$\\  
$\bE^4$&$0$&$2.6$&$3.2$&$3.9$&$4.5$&$4.9$&$5.0$&$5.2$&$5.4$&$5.5$&$5.6$& $5.8$&$5.9$&$6.1$&$6.3$&$6.6$&$6.9$&$7.3$\\  
$\bE^5$&$0$&$2.8$&$3.5$&$4.1$&$4.6$&$5.0$&$5.2$&$5.4$&$5.5$&$5.7$&$5.8$& $5.9$&$6.0$&$6.1$&$6.3$&$6.5$&$6.7$&$7.0$\\ 
$\bS^2$&$0$&$8.4$&$9.9$&$10.2$&$10.1$&$10$&$9.9$&$9.8$&$9.6$&$9.5$&$9.5$&$9.5$&$9.5$&$9.5$&$9.5$&$9.5$&$9.6$&$9.7$\\  
$\bS^3$&$0$&$8.4$&$9.8$&$10.2$&$10.1$&$10.1$&$10$&$9.9$&$9.6$&$9.5$&$9.6$&$9.6$&$9.6$&$9.6$&$9.7$&$9.7$&$9.7$&$9.7$\\  
$\bS^4$&$0$&$8.4$&$9.8$&$10.1$&$10.2$&$10.1$&$10$&$9.9$&$9.7$&$9.6$&$9.6$&$9.7$&$9.7$&$9.7$&$9.7$&$9.8$&$9.8$&$9.7$\\ 
$\bS^5$&$0$&$8.5$&$9.8$&$10.1$&$10.2$&$10.1$&$10.1$&$9.9$&$9.7$&$9.7$&$9.7$&$9.7$&$9.7$&$9.8$&$9.8$&$9.8$&$9.8$&$9.7$\\  \hline
\bottomrule
\end{tabular}
 \label{tab:tab2}
\end{table} 
\begin{table}[h] 
\caption{$\delta( P_{\alpha_{k}}, \widehat{P}_{\alpha_{k}})\times 10^{-3}$ for different space forms --- $\Delta(T) = 6$.}
\centering
\setlength{\tabcolsep}{3pt}
\begin{tabular}{l cccccccccccccccccc} 
\toprule
$k$ 		&$3$&$4$&$5$&$6$&$7$&$8$&$9$ & $10$ &	 $11$ & $12$ &$13$& $14$ &$15$& $16$ &$17$& $18$ &$19$ & $20$\\ \hline
$\bH^2$&$0$&$1.6$&\cellcolor{Gray}$2.5$&\cellcolor{Gray}$3.0$&\cellcolor{Gray}$3.2$&\cellcolor{Gray}$3.2$&\cellcolor{Gray}$3.1$&\cellcolor{Gray} 	  $2.9$&\cellcolor{Gray}$2.9$&\cellcolor{Gray}$2.7$&\cellcolor{Gray}$2.5$& \cellcolor{Gray}$2.3$&\cellcolor{Gray} $2.1$&\cellcolor{Gray}$1.9$&\cellcolor{Gray} $1.6$& \cellcolor{Gray}$1.3$&$1.1$&$1.4$\\
$\bH^3$&$0$&$2.3$&$2.8$&$3.6$&$3.8$&$3.8$&$3.8$&$3.7$&$3.6$&$3.5$&$3.3$& $3.0$&$2.8$&$2.5$ &$2.1$&$1.6$&\cellcolor{Gray} $1.0$&$0.8$\\ 
$\bH^4$&$0$&$2.7$&$3.3$&$3.9$&$4.0$&$4.2$&$4.2$&$4.2$&$4.0$&$3.9$&$3.7$& $3.5$&$3.2$&$2.9$&$2.5$&$1.9$&$1.2$&\cellcolor{Gray}$0.4$\\ 
$\bH^5$&$0$&$3.0$&$3.6$&$4.0$&$4.2$&$4.5$&$4.5$&$4.4$&$4.3$&$4.1$&$4.0$& $3.8$&$3.5$&$3.2$&$2.8$&$2.2$&$1.5$&$0.5$\\
$\bE^2$&$0$&\cellcolor{Gray}$1.5$&$2.8$&$3.8$&$4.0$&$4.3$&$4.5$&$4.6$&$4.7$&$4.9$&$5.0$& $5.2$&$5.4$&$5.6$&$5.8$&$6.2$&$6.7$ &$7.5$\\  
$\bE^3$&$0$&$2.2$&$3.0$&$4.2$&$4.5$&$4.9$&$5.0$&$5.2$&$5.3$&$5.3$&$5.4$& $5.5$ &$5.6$&$5.7$&$5.8$&$6.0$&$6.2$&$6.8$\\  
$\bE^4$&$0$&$2.6$&$3.2$&$4.4$&$4.8$&$5.2$&$5.3$&$5.4$&$5.5$&$5.6$&$5.6$& $5.7$&$5.7$&$5.8$&$5.8$&$5.9$&$6.0$&$6.3$\\  
$\bE^5$&$0$&$2.8$&$3.5$&$4.5$&$5.0$&$5.3$&$5.4$&$5.6$&$5.7$&$5.7$&$5.8$& $5.8$&$5.8$&$5.8$&$5.8$&$5.8$&$5.8$&$5.9$\\ 
$\bS^2$&$0$&$8.4$&$9.9$&$10.2$&$10.1$&$10$&$9.9$&$9.8$&$9.6$&$9.4$&$9.4$&$9.4$&$9.4$&$9.3$&$9.3$&$9.3$&$9.2$&$9.3$\\  
$\bS^3$&$0$&$8.4$&$9.8$&$10.2$&$10.1$&$10.1$&$10$&$9.9$&$9.6$&$9.5$&$9.5$&$9.5$&$9.5$&$9.5$&$9.5$&$9.4$&$9.4$&$9.3$\\  
$\bS^4$&$0$&$8.4$&$9.8$&$10.1$&$10.2$&$10.1$&$10$&$9.9$&$9.7$&$9.6$&$9.6$&$9.6$&$9.6$&$9.5$&$9.5$&$9.5$&$9.4$&$9.3$\\ 
$\bS^5$&$0$&$8.5$&$9.8$&$10.1$&$10.2$&$10.1$&$10.1$&$9.9$&$9.7$&$9.6$&$9.6$&$9.6$&$9.6$&$9.6$&$9.6$&$9.5$&$9.5$&$9.3$\\  \hline
\bottomrule
\end{tabular}
 \label{tab:tab3}
\end{table}

{\bf Remark.} Ordinal spread variables of larger sub-cliques are more effective in testing for distinguishing the curvature sign and the dimension of space forms. For instance, consider a triangle in a space form. Regardless of the presumed space form, we have
\[
\alpha_{k} = \begin{cases}
1 &\mbox{ if } \ k = 1,\\
1 &\mbox{ if } \ k = 2, \\
2 &\mbox{ if } \ k = 3,
\end{cases} \ \ \ \mbox{with probability } 1.
\]
This is a trivial result from {\bf Proposition 1} and the related discussion in its proof. Therefore, the statistics of $\alpha_{k}$ can not bear any useful information about the geometry of data. This fact, along with the total number of available unique sub-cliques, should be considered for implementing a proper hypothesis test based on a majority vote; see \Cref{tab:tab1,tab:tab2,tab:tab3}.

Note that we can also deign an aggregate hypothesis test based on ${\bm \alpha}_{N}$ by defining the following distance function between $P_{{\bm \alpha}_{N}}$ and $ \widehat{P}_ {{\bm \alpha}_{N} }$, e.g., $\delta \big( P_{{\bm \alpha}_{N}} , \widehat{P}_ {{\bm \alpha}_{N} }\big) = \sum_{k=1}^{N} \delta( P_{\alpha_{k}}, \widehat{P}_{\alpha_{k}})$. This definition involves all ordinal spread variables related to sub-cliques of size $N$, i.e., $\alpha_{k}$. Then, we can perform minimum-distance hypothesis tests for sub-cliques of sizes $N \in \set{5, \ldots 20}$. For each experiment, hyperbolic spaces provide the best matches for ordinal spread variables of each random tree; see \Cref{tab:tab4,tab:tab5,tab:tab6}.
This aggregate hypothesis test proves to more robustly reveal the hyperbolicity of weighted trees, compared to the individual tests based on ordinal spread variable $\alpha_N$.

\begin{table}[H] 
\caption{$\delta \big( P_{{\bm \alpha}_{N}} , \widehat{P}_ {{\bm \alpha}_{N} }\big)\times 10^{-2}$ for different space forms --- $\Delta(T) = 4$.}
\centering
\setlength{\tabcolsep}{3pt}
\begin{tabular}{l cccccccccccccccc} 
\toprule
$N$ 		&$5$&$6$&$7$&$8$&$9$&$10$&$11$&$12$&$13$&$14$&$15$&$16$&$17$&$18$&$19$&$20$\\ \hline
$\bH^2$&$3.3$&\cellcolor{Gray}$3.7$&\cellcolor{Gray}$3.9$&\cellcolor{Gray}$4.1$&\cellcolor{Gray}$4.2$&\cellcolor{Gray}$4.2$&\cellcolor{Gray}$4.1$&\cellcolor{Gray}$4.1$&\cellcolor{Gray}$4.0$&\cellcolor{Gray}$4.0$&\cellcolor{Gray}$3.9$&\cellcolor{Gray}$3.8$&\cellcolor{Gray}$3.8$&\cellcolor{Gray}$3.7$&\cellcolor{Gray}$3.6$&\cellcolor{Gray}$3.6$\\
$\bH^3$&\cellcolor{Gray}$3.0$&$3.8$&$4.2$&$4.6$&$4.7$&$4.7$&$4.7$&$4.6$&$4.6$&$4.5$&$4.4$&$4.4$&$4.3$&$4.2$&$4.1$&$4.0$\\ 
$\bH^4$&$3.4$&$4.1$&$4.7$&$5.0$&$5.2$&$5.2$&$5.1$&$5.1$&$5.0$&$4.9$&$4.8$&$4.8$&$4.7$&$4.5$&$4.4$&$4.4$\\ 
$\bH^5$&$3.7$&$4.4$&$5.0$&$5.3$&$5.5$&$5.5$&$5.5$&$5.4$&$5.3$&$5.3$&$5.2$&$5.1$&$5.0$&$4.9$&$4.8$&$4.7$\\
$\bE^2$&$7.0$&$8.0$&$8.3$&$8.4$&$8.2$&$8.0$&$7.8$&$7.6$&$7.3$&$7.1$&$6.8$&$6.6$&$6.4$&$6.2$&$6.0$&$5.9$\\  
$\bE^3$&$5.9$&$6.9$&$7.9$&$8.1$&$8.0$&$7.9$&$7.9$&$7.8$&$7.6$&$7.5$&$7.3$&$7.1$&$6.9$&$6.7$&$6.6$&$6.4$\\  
$\bE^4$&$5.2$&$6.6$&$7.6$&$7.9$&$7.9$&$7.9$&$8.0$&$8.0$&$7.8$&$7.7$&$7.5$&$7.3$&$7.2$&$7.0$&$6.9$&$6.7$\\  
$\bE^5$&$5.0$&$6.4$&$7.4$&$7.8$&$7.9$&$8.0$&$8.1$&$8.1$&$8.0$&$7.8$&$7.6$&$7.5$&$7.4$&$7.2$&$7.1$&$6.9$\\ 
$\bS^2$&$12.0$&$15.5$&$17.8$&$19.4$&$20.2$&$20.3$&$20.2$&$19.8$&$19.3$&$18.8$&$18.3$&$17.7$&$17.1$&$16.6$&$16.1$&$15.6$\\  
$\bS^3$&$11.8$&$15.5$&$17.8$&$19.5$&$20.2$&$20.4$&$20.3$&$19.9$&$19.5$&$18.9$&$18.4$&$17.9$&$17.3$&$16.8$&$16.2$&$15.7$\\ 
$\bS^4$&$11.7$&$15.4$&$17.8$&$19.5$&$20.1$&$20.4$&$20.3$&$20.0$&$19.6$&$19.0$&$18.5$&$17.9$&$17.4$&$16.9$&$16.3$&$15.8$\\ 
$\bS^5$&$11.6$&$15.4$&$17.8$&$19.4$&$20.1$&$20.4$&$20.4$&$20.0$&$19.6$&$19.1$&$18.6$&$18.0$&$17.5$&$16.9$&$16.4$&$15.9$\\  \hline
\bottomrule
\end{tabular}
 \label{tab:tab4}
\end{table}

\begin{table}[H] 
\caption{$\delta \big( P_{{\bm \alpha}_{N}} , \widehat{P}_ {{\bm \alpha}_{N} }\big)\times 10^{-2}$ for different space forms --- $\Delta(T) = 5$.}
\centering
\setlength{\tabcolsep}{3pt}
\begin{tabular}{l cccccccccccccccc} 
\toprule
$N$ 		&$5$&$6$&$7$&$8$&$9$&$10$&$11$&$12$&$13$&$14$&$15$&$16$&$17$&$18$&$19$&$20$\\ \hline
$\bH^2$&$5.4$&$5	.5$&\cellcolor{Gray}$5.3$&\cellcolor{Gray}$5.4$&\cellcolor{Gray}$5.2$&\cellcolor{Gray}$5.1$&\cellcolor{Gray}$4.9$&\cellcolor{Gray}$4.8$&\cellcolor{Gray}$4.5$&\cellcolor{Gray}$4.4$&\cellcolor{Gray}$4.2$&\cellcolor{Gray}$4.1$&\cellcolor{Gray}$3.9$&\cellcolor{Gray}$3.8$&\cellcolor{Gray}$3.7$&\cellcolor{Gray}$3.6$\\
$\bH^3$&$4.5$&\cellcolor{Gray}$4.8$&$5.4$&$5.9$&$5.9$&$5.9$&$5.8$&$5.7$&$5.5$&$5.4$&$5.3$&$5.2$&$5.1$&$4.9$&$4.8$&$4.7$\\ 
$\bH^4$&\cellcolor{Gray}$4.1$&$5.0$&$5.9$&$6.3$&$6.4$&$6.4$&$6.4$&$6.3$&$6.2$&$6.2$&$6.1$&$5.9$&$5.8$&$5.7$&$5.5$&$5.4$\\ 
$\bH^5$&$4.2$&$5.2$&$6.2$&$6.6$&$6.8$&$6.8$&$6.8$&$6.8$&$6.8$&$6.7$&$6.6$&$6.4$&$6.3$&$6.2$&$6.1$&$5.9$\\
$\bE^2$&$9.3$&$10.8$&$11.5$&$11.5$&$11.3$&$11.1$&$10.9$&$10.7$&$10.3$&$10$&$9.7$&$9.4$&$9.1$&$8.9$&$8.6$&$8.4$\\  
$\bE^3$&$8.0$&$9.6$&$10.6$&$11.0$&$11.0$&$11.1$&$11.0$&$10.8$&$10.5$&$10.3$&$10.0$&$9.7$&$9.5$&$9.3$&$9.0$&$8.8$\\  
$\bE^4$&$7.4$&$9.0$&$10.3$&$10.8$&$10.9$&$11.0$&$11.0$&$10.8$&$10.6$&$10.4$&$10.1$&$9.9$&$9.7$&$9.5$&$9.3$&$9.1$\\  
$\bE^5$&$6.9$&$8.7$&$10.1$&$10.6$&$10.8$&$10.9$&$10.9$&$10.8$&$10.6$&$10.4$&$10.2$&$10.0$&$9.8$&$9.6$&$9.4$&$9.2$\\ 
$\bS^2$&$13.6$&$17.2$&$19.6$&$21.0$&$21.7$&$21.8$&$21.6$&$21.2$&$20.6$&$20.0$&$19.4$&$18.8$&$18.1$&$17.5$&$16.9$&$16.4$\\  
$\bS^3$&$13.4$&$17.3$&$19.8$&$21.1$&$21.8$&$21.9$&$21.8$&$21.3$&$20.8$&$20.2$&$19.6$&$18.9$&$18.3$&$17.7$&$17.1$&$16.5$\\ 
$\bS^4$&$13.3$&$17.3$&$19.8$&$21.1$&$21.8$&$22.0$&$21.8$&$21.4$&$20.9$&$20.3$&$19.7$&$19.0$&$18.4$&$17.8$&$17.1$&$16.6$\\
$\bS^5$&$13.1$&$17.3$&$19.7$&$21.1$&$21.9$&$22.0$&$21.8$&$21.5$&$20.9$&$20.3$&$19.7$&$19.0$&$18.4$&$17.8$&$17.2$&$16.6$\\  \hline
\bottomrule
\end{tabular}
 \label{tab:tab5}
\end{table}

\begin{table}[H] 
\caption{$\delta \big( P_{{\bm \alpha}_{N}} , \widehat{P}_ {{\bm \alpha}_{N} }\big)\times 10^{-2}$ for different space forms --- $\Delta(T) = 6$.}
\centering
\setlength{\tabcolsep}{3pt}
\begin{tabular}{l cccccccccccccccc} 
\toprule
$N$ 		&$5$&$6$&$7$&$8$&$9$&$10$&$11$&$12$&$13$&$14$&$15$&$16$&$17$&$18$&$19$&$20$\\ \hline
$\bH^2$&$5.8$&$5.8$&$5.9$&\cellcolor{Gray}$5.8$&\cellcolor{Gray}$5.7$&\cellcolor{Gray}$5.5$&\cellcolor{Gray}$5.2$&\cellcolor{Gray}$5.0$&\cellcolor{Gray}$4.9$&\cellcolor{Gray}$4.7$&\cellcolor{Gray}$4.6$&\cellcolor{Gray}$4.4$&\cellcolor{Gray}$4.3$&\cellcolor{Gray}$4.2$&\cellcolor{Gray}$4.1$&\cellcolor{Gray}$3.9$\\
$\bH^3$&$4.8$&$5.2$&\cellcolor{Gray}$5.7$&$5.9$&$6.1$&$6.1$&$5.9$&$5.8$&$5.7$&$5.5$&$5.4$&$5.3$&$5.2$&$5.0$&$4.9$&$4.8$\\ 
$\bH^4$&$4.5$&\cellcolor{Gray}$5.0$&$5.9$&$6.4$&$6.6$&$6.5$&$6.4$&$6.3$&$6.3$&$6.2$&$6.0$&$5.9$&$5.8$&$5.6$&$5.5$&$5.4$\\ 
$\bH^5$&\cellcolor{Gray}$4.3$&$5.1$&$6.2$&$6.7$&$6.9$&$6.9$&$6.8$&$6.7$&$6.7$&$6.6$&$6.5$&$6.3$&$6.2$&$6.1$&$6.0$&$5.8$\\
$\bE^2$&$9.6$&$11.1$&$11.7$&$11.7$&$11.4$&$11.1$&$10.8$&$10.5$&$10.2$&$9.9$&$9.6$&$9.3$&$9.0$&$8.7$&$8.5$&$8.3$\\  
$\bE^3$&$8.3$&$9.9$&$10.8$&$11.0$&$10.9$&$10.9$&$10.8$&$10.6$&$10.4$&$10.1$&$9.8$&$9.6$&$9.3$&$9.1$&$8.9$&$8.7$\\  
$\bE^4$&$7.7$&$9.2$&$10.3$&$10.7$&$10.8$&$10.8$&$10.8$&$10.6$&$10.4$&$10.2$&$10.0$&$9.7$&$9.5$&$9.3$&$9.1$&$8.9$\\  
$\bE^5$&$7.2$&$8.8$&$10.0$&$10.5$&$10.6$&$10.8$&$10.8$&$10.6$&$10.4$&$10.2$&$10.0$&$9.8$&$9.6$&$9.4$&$9.2$&$9.0$\\ 
$\bS^2$&$13.3$&$17.0$&$19.3$&$20.6$&$21.3$&$21.4$&$21.2$&$20.8$&$20.3$&$19.7$&$19.1$&$18.5$&$17.9$&$17.3$&$16.7$&$16.2$\\  
$\bS^3$&$13.1$&$17.1$&$19.4$&$20.7$&$21.4$&$21.5$&$21.4$&$20.9$&$20.5$&$19.9$&$19.3$&$18.7$&$18.0$&$17.4$&$16.9$&$16.3$\\ 
$\bS^4$&$13.0$&$17.1$&$19.4$&$20.7$&$21.4$&$21.6$&$21.4$&$21.0$&$20.5$&$20.0$&$19.4$&$18.7$&$18.1$&$17.5$&$16.9$&$16.4$\\ 
$\bS^5$&$12.9$&$17.1$&$19.4$&$20.7$&$21.4$&$21.6$&$21.5$&$21.1$&$20.6$&$20.0$&$19.4$&$18.8$&$18.2$&$17.6$&$17.0$&$16.4$\\  \hline
\bottomrule
\end{tabular}
 \label{tab:tab6}
\end{table}

\subsubsection{On Euclidean Embedding Dimension of Trees}
\begin{table}[b] 
  \centering
    \caption{The $N$-point ordinal spread for $\bE^2, \bE^3,\bE^4$ versus $\widehat{A}_N$ estimated from $\widetilde{D}_4, \widetilde{D}_5$ and $\widetilde{D}_6$. }
\begin{tabular}{cccc cccccc}
\toprule
$N$ & $6$& $8$ & $10$ & $12$ & $14$ & $16$ & $18$ & $20$ & $100$\\
\hline
\rowcolor{Gray}  $\widetilde{D}_{4}: \widehat{A}_N$ &  $11$ & $22$ & $37$ & $56$ & $79$ & $106$ & $137$ & $169$ & $4421$\\
$\widetilde{D}_{5}: \widehat{A}_N$ & $11$ & $22$ & $37$ & $56$ & $79$ & $106$ & $136$ & $172$ & $4412$ \\
\rowcolor{Gray} $\widetilde{D}_{6}: \widehat{A}_N$ & $11$  & $22$ & $37$ & $56$ & $79$ & $106$& $137$ & $170$&  $4454$\\
$A_M(\bE^2)$ & $11$ & $21$ & $34$ & $51$ & $71$ & $94$ & $121$ & $151$ & $4048$ \\
\rowcolor{Gray} $A_N(\bE^3)$ & $11$  & $22$ & $37$ & $56$ & $79$ & $106$& $135$ & $168$&  $4573$\\
$A_N(\bE^4)$ & $11$ & $22$ & $37$ & $56$ & $79$ & $106$ & $137$ & $172$ & $4741$ \\
\bottomrule
\end{tabular}
  \label{tab:tree_table}
\end{table}
We generate a random tree $T$ with $N=10^4$ nodes, maximum degree of $\Delta$, and i.i.d. edge weights from $\mathrm{unif}(0,1)$. Let $\widetilde{D}_\Delta = D_\Delta + n$, where $n$ is a zero mean Gaussian noise with $40$ decibel signal-to-noise ratio, be the noisy distance matrix for $T$. The embedding goal is to find a representation $x_1, \ldots, x_N$ for tree nodes in $S$, such that
\[
d(x_{i}, x_{j}) \leq d( x_{k}, x_{l}) \Longleftrightarrow \widetilde{D}_\Delta (i,j) \leq \widetilde{D}_\Delta (k,l).
\]

We randomly select $10^6$ sub-cliques of sizes $N \in \set{2,4, \ldots, 20, 100}$. In \Cref{tab:tree_table}, we give the empirical $N$-th ordinal spread based on non-metric measurements associated with the sub-cliques, i.e., $\widehat{A}_N$. The distribution-free test gives a lower bound of $\hat{d} \geq 4$ for Euclidean embedding dimension. 

On the other hand, consider a random weighted tree and a node $x_n$ with degree $\Delta_n$. \footnote{We assume the existence of a perfect embedding.} We can easily see that 
\[
\max_{i \in [\Delta_n]} d(x_n , x_{n_i}) \leq \min_{\substack{ i,j \in [\Delta_n] \\ i \neq j}} d(x_{n_i} , x_{n_j}),
\]
where $x_{n_1}, \ldots, x_{n_\Delta}$ are adjacent points to $x_n$. Hence, $\set{x_n} \cup \set{x_{n_i}}_{i=1}^{\Delta_n}$ is a set of $\Delta_n+1$ points with maximum ordinal spread. Therefore, a lower bound for embedding dimension of a metric tree $T$ (in Euclidean space) is given by
\begin{align*}
\hat{d} \geq  \min \set{d: K(\R^d) \geq \Delta (T) +1} 
\end{align*}
The exponential growth of $\rho_d$ gives $\hat{d} = \Omega (\log \Delta(T))$.

{\bf Remark.} In absence of any prior information for proper distributions of data points, the estimate for the dimension of underlying space form is unreliable.
The statistics of the ordinal spread variables are \emph{invariant} with respect isotonic transformation of data points, e.g., rotation, translation, and uniform scaling in Euclidean space. 
\begin{fact}
Let $\set{x_n}_{n=1}^{N}$ be a set of points in $(S,d)$. The ordinal spread vector is invariant with respect to strongly isotonic transformation \cite{kleindessner2014uniqueness} of points. In other words, let $\psi: S \rightarrow S$ be an arbitrary function such that for all $x,y,z,w \in S$ we have
\begin{align*}
d(x,y) < d(z,w) &\Rightarrow d \big( \psi(x), \psi(y) \big) < d\big(\psi(z), \psi(w) \big), \\
d(x,y) = d(z,w) &\Rightarrow d \big( \psi(x), \psi(y) \big) = d\big( \psi(z), \psi(w) \big),
\end{align*}
then, $\boldsymbol{\alpha}  \left( \set{x_n}_{n=1}^{N} \right) = \boldsymbol{\alpha}  \left( \set{\psi(x_n)}_{n=1}^{N} \right)$.
\end{fact}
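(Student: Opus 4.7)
The plan is to show that the entire sorted index list $(i_r,j_r)_{r \in [{N \choose 2}]}$ associated with $\set{x_n}_{n=1}^N$ can be used, unchanged, as a valid sorted index list for $\set{\psi(x_n)}_{n=1}^N$. Since Definition~1 expresses $\alpha_n$ solely in terms of this list (through the indicator of the rank at which the $n$-th distinct point index first appears), equality of the sorted index lists immediately yields equality of the ordinal spread vectors.

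First I would fix a sorted index list for $\set{x_n}_{n=1}^N$, namely a sequence $(i_r,j_r)$ of distinct unordered pairs of indices satisfying
\begin{equation*}
d(x_{i_1},x_{j_1}) \geq d(x_{i_2},x_{j_2}) \geq \cdots \geq d(x_{i_{{N\choose 2}}},x_{j_{{N\choose 2}}}).
\end{equation*}
The strong isotony hypothesis on $\psi$ can be rephrased contrapositively: for any two pairs of points $(a,b)$ and $(c,d)$ in $S$,
\begin{equation*}
d(a,b) \leq d(c,d) \quad \Longleftrightarrow \quad d\big(\psi(a),\psi(b)\big) \leq d\big(\psi(c),\psi(d)\big),
\end{equation*}
because a strict inequality is preserved by the first condition and equality by the second. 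Applying this equivalence to consecutive pairs $(i_r,j_r)$ and $(i_{r+1},j_{r+1})$ shows that the same sequence of index pairs is also a valid sorted index list for $\set{\psi(x_n)}_{n=1}^N$.

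Having established that the sorted index lists coincide, the rest is bookkeeping: for every $n \in [N]$, the quantity
\begin{equation*}
\alpha_n = \min\Big\{m \in \mathbb{N} : \card \bigcup_{r=1}^{m}\set{i_r,j_r} \geq n\Big\}
\end{equation*}
depends on $(i_r,j_r)_r$ only through the set-valued union, so it is identical for both point sets, and $\boldsymbol{\alpha}(\set{x_n}) = \boldsymbol{\alpha}(\set{\psi(x_n)})$ follows coordinate-wise.

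The only subtlety, and the one step I would want to write out carefully, is the handling of ties in the distance list. When some pairwise distances coincide, the sorted index list is not unique; however, the equality half of the isotony hypothesis guarantees that the equivalence classes of tied distances are preserved setwise by $\psi$. Consequently any tie-breaking rule that produces a valid sorted list for $\set{x_n}$ produces a valid sorted list for $\set{\psi(x_n)}$, and since $\alpha_n$ depends only on the cumulative union of index pairs (which is invariant to reshuffling within a tied block that does not introduce new indices before the same cutoff), the ordinal spread vector is well-defined and agrees between the two point sets. This is the main — and only — technical point; everything else is a direct unwinding of Definition~1.
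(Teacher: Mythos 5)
Your argument is correct and is essentially the reasoning the paper implicitly relies on: the paper states this Fact without any written proof, and the natural (indeed, only) route is exactly yours --- strong isotony preserves both strict inequalities and equalities of pairwise distances, hence any valid sorted index list for $\set{x_n}_{n=1}^{N}$ is a valid sorted index list for $\set{\psi(x_n)}_{n=1}^{N}$, and $\boldsymbol{\alpha}$ is a function of that list alone. One small caution on your tie discussion: the parenthetical claim that $\alpha_n$ is invariant to reshuffling within a tied block is false in general (e.g.\ with $d(x_1,x_2)=d(x_3,x_4)=d(x_1,x_3)$ the orderings $(1,2),(3,4),(1,3)$ and $(1,2),(1,3),(3,4)$ give $\alpha_4=2$ and $\alpha_4=3$ respectively), but this is a well-definedness issue of the ordinal spread itself rather than of the invariance claim; the paper sidesteps it by implicitly assuming distinct pairwise distances (as in its proof of Proposition~1), under which your argument closes cleanly.
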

Therefore, we can also use compact distributions, e.g., the multivariate uniform distribution. The arbitrary choices of Gaussian and uniform distributions do not significantly change the statistics of the ordinal spread variables --- at least, it does not affect the key results in this experiment.

\subsection{Single-cell RNA Expression Data}
We use the single-cell RNA sequencing atlas provided in \cite{plass2018cell}. This atlas contains $26000$ cell expression vectors for adult planarians. Each cell is an integer-valued vector representing read counts of gene expressions. The specific choices of pre-processing method and the comparison criteria \emph{imply} a geometry --- namely, geometry of similarity comparisons --- that is not necessarily related to the domain of data vectors. The choice of comparisons are as follows:
\begin{itemize}
\item {\bf $\ell_2$ distance:} The points $x_i, x_j$ are more similar to each other than $x_k, x_l$ if $\norm{x_i-x_j}_2 \leq \norm{x_k-x_l}_2$. The \emph{true} geometry of comparisons is a $21000$-dimensional Euclidean space;
\item { \bf Angles:} The points $x_i, x_j$ are more similar than $x_k, x_l$ if $\measuredangle (x_i,x_j) \leq \measuredangle (x_i,x_j)$, where
\[
\measuredangle (x_i,x_j) \bydef \acos \frac{(x_i - \mu)^\T (x_j-\mu)}{\norm{x_i-\mu}\norm{x_j-\mu}},
\]
and $\mu = \frac{1}{N} \sum_{n \in [N]} x_n$. We use spherical distance to compare cell vectors. The geometry of comparisons is a spherical space of dimension $21000-1$.
\item { \bf Relative forest accessibility (RFA) index:} For a set of points $x_1 \ldots, x_N$, we construct the local connectivity edge set $E$ from a symmetric $k$-nearest neighbor method. The relative forest accessibility matrix is a $N \times N$ doubly stochastic matrix defined as $P =(I+L)^{-1}$ where $L = D-A$ is the Laplacian matrix, $A = (A_{i,j})$ such that
\[
A_{i,j} = \exp \big(- \frac{\norm{x_i-x_j}^2}{2\sigma^2} \big) [(i,j) \in E],
\]
where the Iverson bracket $[(i,j) \in E] = 1$ if $(i,j) \in E$ and is $0$ otherwise, and $D$ is a diagonal matrix with $D_{ii}= \sum_{j \in [N]}A_{i,j}$. The $ij$-th element of $P$ is the probability of a spanning forest includes a rooted tree at $x_i$ and is connected to $x_j$ --- a measure of similarity between $x_i$ and $x_j$ \cite{klimovskaia2020poincare}. In this experiment, we let $\sigma = \frac{1}{\sqrt{10}N^2}\sum_{i,j \in [N]} \norm{x_i-x_j}$ and ignore the hard edge assignment since the conservative choice of kernel width performs a soft edge assignment; see \Cref{fig:prior} $(b)$. For a fast implementation of $P = (I+L)^{-1}$, we approximate the weighted adjacency matrix $A \in \R^{N \times N}$ with a rank-$500$ semidefinite matrix --- via a simple eigenvalue thresholding --- and use Woodbury matrix identity to compute $P$. The points $x_i, x_j$ are more similar than $x_k, x_l$ if the relative forest accessibility index $p_{i,j}$ is greater than $p_{k,l}$. The geometry of RFA comparisons is unknown.
\end{itemize}

We propose heavy-tailed distributions for oracle distributions. To confirm the validity of this choice, we generate random normal, and log-normal (a heavy-tailed distribution) of various dimensions $d = 2, \ldots, 20$, and different scale parameters; see \Cref{fig:prior} $(a)$. The log-normal distributions gives a better match for the empirical distribution of $\norm{X-\mathbb{E} X}$ --- which we denote as a \emph{side information}. Specifically, we generate points according to the following distributions to find the optimal dimension and scale parameter $a$:
\begin{itemize}
\item Hyperbolic space: $x = [\sqrt{1+\norm{z}^2}, z^\T]^\T$, where $z \sim e^{a\mathcal{N}(0, I)} - \mathbb{E}e^{a\mathcal{N}(0, I)}$;
\item Euclidean space: $x \sim e^{a\mathcal{N}(0, I)}$; 
\item Spherical space: $x = \frac{1}{\norm{z}} z$, where $z \sim e^{a\mathcal{N}(0, I)} - \mathbb{E}e^{a\mathcal{N}(0, I)}$. 
\end{itemize}
Therefore, we approximate the distribution of RNAseq read counts by (projected) log-normal distributions in a space form. In bioinformatics, the heavy-tailed distribution of gene expression reads is a well-known fact  \cite{ono2013pbsim}.
 \begin{figure}[H]
\center
  \includegraphics[width=1\linewidth]{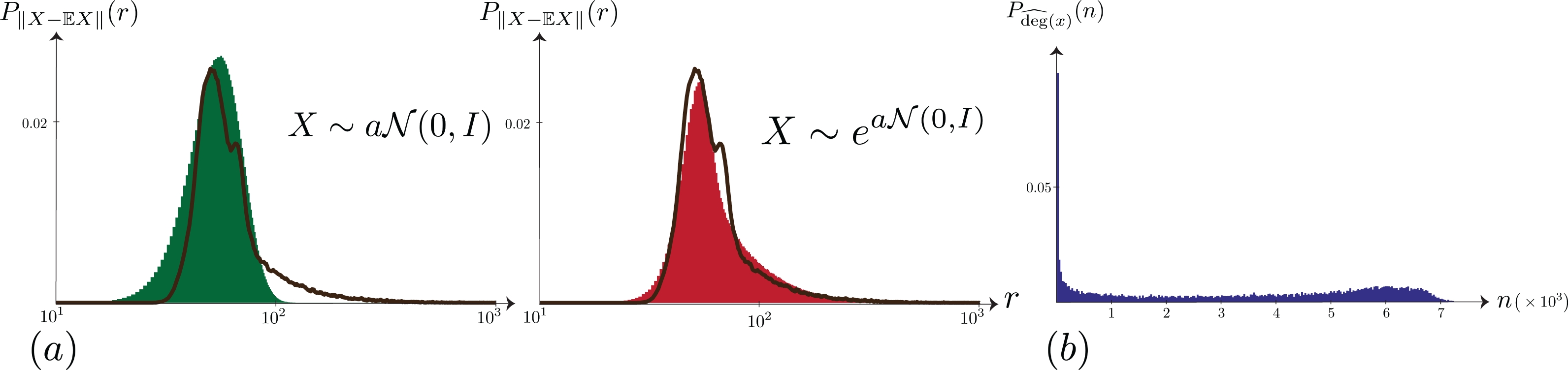}
  \caption{$(a)$ Left: The empirical distribution of the norm of centered cell vectors (brown), and distribution of the norm of Gaussian points (green) for $d = 8$, and $a = 21.5$; $(a)$ Right: The distribution of the norm of centered log normal points (red) for $d = 7$ and $a = 20$. $(b)$: The distribution of \emph{soft} degrees of each node, i.e., $\widehat{\mathrm{deg}}(x_n) = \sum_{m \in [N]} A_{nm}$.}
  \label{fig:prior}
\end{figure}

{\bf $\ell_2$ distances:} Let $d \in \set{2, 10, 10^2, 10^3, 10^4}$, and sub-cliques of size $N = 20$. For a space form of dimension $d$, we iterate over a set of scale parameters $a$, and pick the point distribution $P_{X}$ that produces the closest ordinal spread distribution to the empirical distribution of $\alpha_{20}$. We show that our proposed test detects the Euclidean space of dimension $100$ as the geometry of $\ell_2$ comparisons. However, higher dimensional Euclidean spaces could also be plausible choices; see \Cref{fig:l2}.

 \begin{figure}[H]
\center
  \includegraphics[width=1\linewidth]{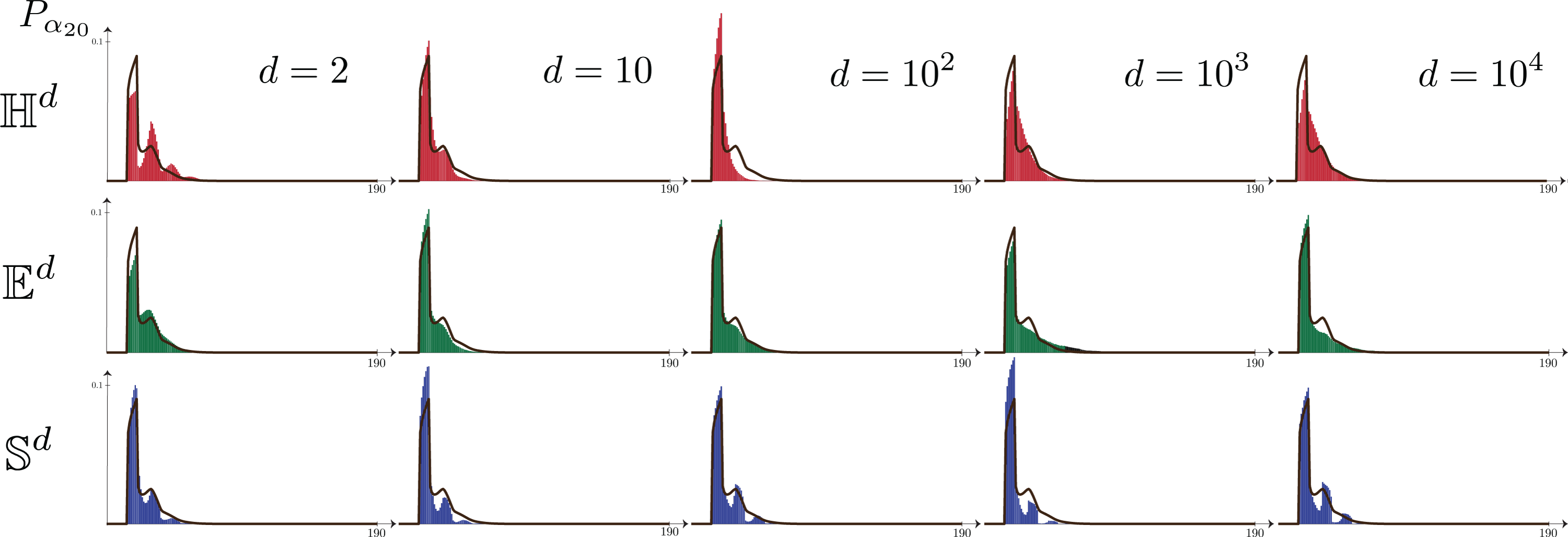}
  \caption{Brown curves represent the empirical PMF of $\alpha_{20}$ derived from $\ell_2$ comparisons. In each row, we generate points from the corresponding space form of dimension $d$. This test reveals a high-dimensional Euclidean space for the geometry of $\ell_2$ comparisons ($d^* = 100$).}
  \label{fig:l2}
\end{figure}

{\bf Angluar distance:} We repeat the experiment for angle comparisons explained earlier. Our proposed test identifies $10000$-dimensional spherical space as the geometry of angle comparisons; see \Cref{fig:angle}.

 \begin{figure}[H]
\center
  \includegraphics[width=1\linewidth]{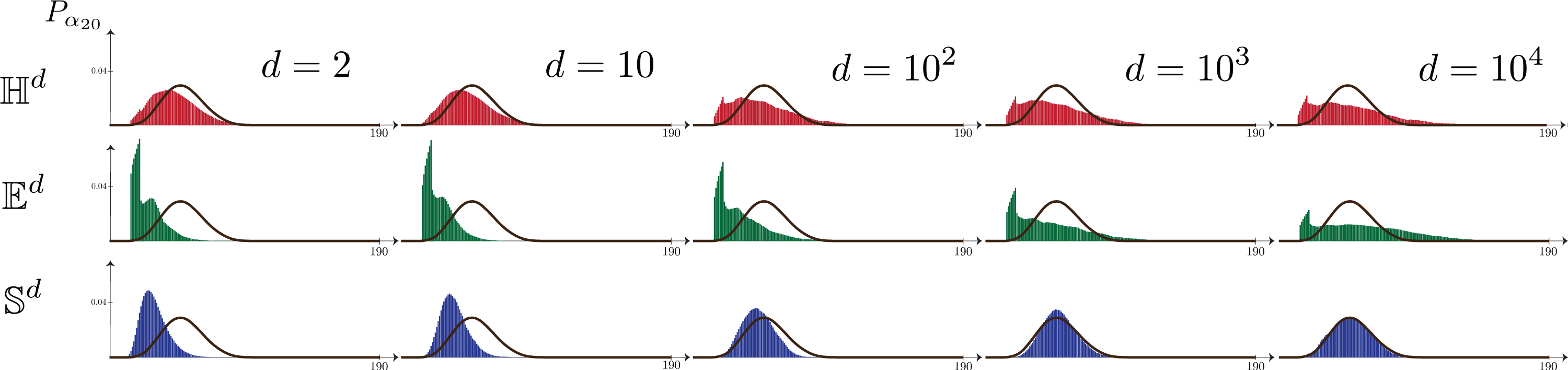}
  \caption{Brown curves represent the empirical PMF of $\alpha_{20}$ from angle comparisons. In each row, we generate points from the corresponding space form of dimension $d$. This test reveals a high-dimensional spherical space for the geometry of angle comparisons ($d^* = 10000$).}
  \label{fig:angle}
\end{figure}

{\bf RFA index comparisons:} This test identifies $1000$-dimensional spherical space as the geometry of RFA index comparisons; see \Cref{fig:RFA}. For embedding ordinal measurements, we pick a random clique of size $200$ and embed it in low-dimensional space forms of different dimensions. Then, we compute the empirical probability of erroneous comparison, i.e., error occurs if $d(x_i,x_j) \geq d(x_k, x_l)$ whereas the points $x_i,x_j$ are more similar to each other compared to the points $x_k, x_l$. We repeat the experiment $200$ times, and report the mean and standard deviations of the probability of error $p_e$. 
 \begin{figure}[b]
\center
  \includegraphics[width=1\linewidth]{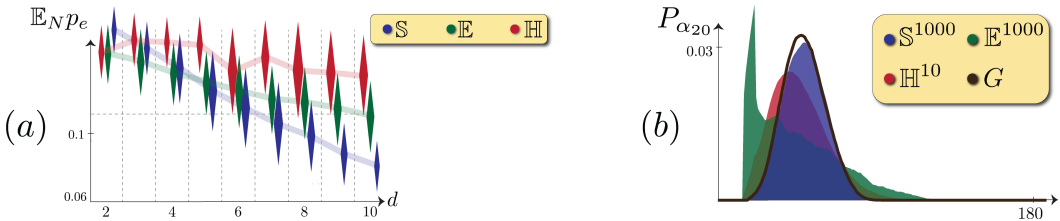}
  \caption{$(a)$ PMFs of $\alpha_{20}$ from the RFA similarity graph ($G$) vs. random points in space forms of optimal dimensions. $(b)$ $\mathbb{E}_N p_e$ for embedded points in $d$-dimensional space forms.}
  \label{fig:RFA}
\end{figure}
An important observation is that higher dimensional of space forms do not necessarily give better matches for the empirical PMF of ordinal spread variables.

\section{EMBEDDING ALGORITHMS}\label{sec:EMBEDDING_ALGORITHMS}
We can use semidefinite programs to solve non-metric embedding problems in hyperbolic and Euclidean spaces \cite{tabaghi2020hyperbolic,agarwal2007generalized}. The main objects in these problems are distance matrices, and the matrix of inner products, e.g., Gramian in Euclidean space and Lorentzian matrix in hyperbolic space. The traditional interior point method to solve semidefinite programs do not scale to large problems. This is especially the case for non-metric embedding problems in which we have ${{N \choose 2} \choose 2} = O(N^4)$ distinct inequality constraints related to pairwise distance comparisons. Therefore, we propose non-metric embedding algorithms based on the method of alternative projections; see \Cref{alg:HyperbolicEmbedding,alg:EuclideanEmbedding,alg:SphericalEmbedding}.

\subsection{Hyperbolic Embedding}
We start with an arbitrary hyperbolic distance matrix (refer to \cite{tabaghi2020hyperbolic}), and a sorted index list. The function $\mathtt{IndexList}(D)$ computes the index list associated with the distance matrix $D$. 

We begin with arranging the elements of $D$ according to the target index list $(i,j)$. In other words, we have 
\[
\mathtt{sort}(D, (i,j)) = (d_{\pi(i_r,j_r)})_{i_r,j_r \in [N]}
\] where $\pi: [N]^2 \rightarrow [N^2]$ is a one-to-one map, such that $\pi(i_r,j_r) = \pi(j_r,i_r)$, $\pi(i_r,i_r) = (i_r,i_r)$, and $\mathtt{IndexList}( \mathtt{sort}(D, (i,j) ) ) = (i,j)$. The resulting symmetric matrix is no longer a valid hyperbolic distance matrix. Therefore, we proceed with finding the best rank-$(d+1)$ Lorentzian matrix--- the matrix of Lorentizan inner products. We compute the corresponding point set, in $\R^{d+1}$, by a simple spectral factorization of the Lorentizan matrix; see \Cref{alg:HyperbolicEmbedding} lines $6-8$ and refer to \cite{tabaghi2020hyperbolic}. Finally, we use a simple method to map each point (columns of $X$) to $\bL^d$, viz.,
\[
P_{\R^{d+1} \rightarrow \bL^d}(x) = \begin{bmatrix}
\sqrt{1+\norm{y}^2} \\ y
\end{bmatrix} \ \mbox{where} \ y = (x_2, \ldots x_{d+1})^\T.
\]
We compute the hyperbolic Gramian, $G = X^\T H X$, where $H = \mathrm{diag}(-1,1, \ldots, 1)\in \R^{(d+1) \times (d+1)}$. This gives us the update for hyperbolic distance matrix $D = \acosh[-G]$; refer to \cite{tabaghi2020hyperbolic}. We repeat this process till convergence. 
\begin{algorithm}[H]
\caption{Non-metric hyperbolic embedding}
\begin{algorithmic}[1]
\Procedure{HyperbolicEmbedding}{$(i,j),d$} 
	\State {\bf input:} Index list $(i,j)$, and embedding dimension $d$.
    \State {\bf initialize:} hyperbolic distance matrix $D$, and an arbitrary index list $(\tilde{i},\tilde{j})$.
    \While{$\norm{ (\tilde{i},\tilde{j}) - \mathtt{IndexList}(D)} > 0$}
    	\State $(\tilde{i},\tilde{j}) \gets \mathtt{IndexList}(D)$. \Comment{The index list related to $D$.}
    	\State $D \gets  \mathtt{sort}(D, (i,j) )$. \Comment{Update $D$ by sort distances according to $(i,j)$.}
    	\State Let $U \Sigma U^\T$ be the eigenvalue decomposition of $G= -\cosh [D]$ such that $\sigma_1 \geq \ldots \geq \sigma_N \in \R$.
    	\State $X = |\Sigma_d|^{1/2} U_d^\T$, where $\Sigma_d = \mathrm{diag} [ (\sigma_1)_{+}, \ldots, (\sigma_d)_{+}, (\sigma_N)_{-}]$ and $U_d$ is the sliced eigenvector matrix.
    	\State $X \gets P_{\R^{d+1} \rightarrow \bL^d}( X)$. \Comment{Map each column of $X \in \R^{(d+1)\times N}$ to $\bL^d$.}
    	\State $G = X^\T H X$. \Comment{Hyperbolic Gramian.}
    	\State $D \gets \acosh [-G]$. \Comment{Hyperbolic distance matrix.}
    \EndWhile
	\State {\bf return} $X$
\EndProcedure
\end{algorithmic}\label{alg:HyperbolicEmbedding}
\end{algorithm}

\subsection{Spherical Embedding}
We propose a similar method for spherical embedding. The matrix of inner products $G$ and the spherical distance matrix $D$ are related via $D = \acos [G]$. For points in $d$-dimensional spherical space, the matrix $G$ is a positive semidefinite matrix of rank $(d+1)$, and with diagonal elements of $1$. The spectral factorization of $G$ gives us the point positions.

At each iteration of \Cref{alg:SphericalEmbedding}, we shuffle the elements of the distance matrix according to the target index list, but the corresponding Gram matrix $G = \cos[ \mathtt{sort}(D, (i,j) ) ]$ is not a valid Gram matrix for points in $\bS^d$. We first find the best rank-$(d+1)$ positive semidefinite matrix via a simple eigenvalue thresholding which gives a set of points in $\R^{d+1}$ --- as opposed to $\bS^d$; see line $9$ of \Cref{alg:SphericalEmbedding}. Therefore, we radially project each point to $\bS^d$, i.e., $P_{\R^{d+1} \rightarrow \bS^d}(x) = \frac{1}{\norm{x}} x$. We repeat this process till a convergence is achieved. 
\begin{algorithm}[H]
\caption{Non-metric spherical Embedding}
\begin{algorithmic}[1]
\Procedure{SphericalEmbedding}{$(i,j),d$}      
    \State {\bf input:} Index list $(i,j)$, and embedding dimension $d$.
    \State {\bf initialize:} Spherical distance matrix $D$, and an arbitrary index list $(\tilde{i},\tilde{j})$.
    \While{$\norm{ (\tilde{i},\tilde{j}) - \mathtt{IndexList}(D)} > 0$}
    	\State $(\tilde{i},\tilde{j}) = \mathtt{IndexList}(D)$. \Comment{The index list related to $D$.}
		\State $D \gets \mathtt{sort}(D, (i,j) )$.  \Comment{Update $D$ by sort distances according to $(i,j)$.}
    	\State Let $U \Sigma U^\T$ be eigenvalue decomposition of $G= \cos [D]$ such that $\sigma_1 \geq \ldots \geq \sigma_N \in \R$.
    	\State Let $\Sigma_d = \mathrm{diag} [ (\sigma_1)_{+}, \ldots, (\sigma_{d+1})_{+}]$, and $U_d$ be corresponding eigenvector matrix.
    	\State $X = \Sigma_d ^{1/2} U_d^\T$.
    	\State $X \gets P_{\R^{d+1} \rightarrow \bS^d}( X)$. \Comment{Map each column of $X \in \R^{(d+1)\times N}$ to $\bS^d$.}
    	\State $G = X^\T X$. \Comment{The Gram matrix.}
    	\State $D \gets \acos [G]$.
    \EndWhile 
    \State {\bf return} $X$
\EndProcedure
\end{algorithmic}\label{alg:SphericalEmbedding}
\end{algorithm}
\subsection{Euclidean Embedding}
Unlike hyperbolic and spherical counterparts, Euclidean distance matrix $D \in \R^{N \times N}$ is the matrix of squared distances between a set of $N$ points $X \in \R^{d \times N}$. This definition lets us to express it as a linear function of the Gram matrix $G = X^\T X$, i.e., $D = \mathcal{K}(G) = -2G + \mathrm{diag}(G) 1^\T + 1 \mathrm{diag}(G)^\T$, where $\mathrm{diag}(G)$ is a vector of diagonal elements of $G$, and $1 \in \R^{N}$ is the vector of all ones. The Gram matrix $G$ is positive semidefinite of rank at most $d$. We can find the centered Gramian from a given distance matrix as $G = -\frac{1}{2} J D J$, where $J = I - \frac{1}{N}11^\T$.  At each iteration of \Cref{alg:EuclideanEmbedding}, we find the best rank-$d$ positive semidefinite matrix via a simple eigenvalue thresholding of $G = -\frac{1}{2} J D J$; see lines $7-8$ of \Cref{alg:EuclideanEmbedding}. The spectral factorization of $G$ gives the point set in $\R^{d}$. We repeat this process until convergence. 
\begin{algorithm}[H]
\caption{Non-metric Euclidean embedding}
\begin{algorithmic}[1]
\Procedure{EuclideanEmbedding}{$(i,j),d$}      
	\State {\bf input:} Index list $(i,j)$, and embedding dimension $d$.
    \State {\bf initialize:} hyperbolic distance matrix $D$, and an arbitrary index list $(\tilde{i},\tilde{j})$.
    \While{$\norm{ (\tilde{i},\tilde{j}) - \mathtt{IndexList}(D)} > 0$}
    \State $(\tilde{i},\tilde{j}) \gets \mathtt{IndexList}(D)$. \Comment{The index list related to $D$.}
    \State $D = \mathtt{sort}(D, (i,j) )$.
    \State Let $U \Sigma U^\T$ be the eigenvalue decomposition of $G= -\frac{1}{2} J D J$ such that $\sigma_1 \geq \ldots \geq \sigma_N \in \R$.
    \State $G = U_d\Sigma_d  U_d^\T$, where $\Sigma_d = \mathrm{diag} [ (\sigma_1)_{+}, \ldots, (\sigma_d)_{+}]$ and $U_d$ is the sliced eigenvector matrix.
    \State $D \gets \mathcal{K}(G)$. \Comment{Euclidean distance matrix.}
    \EndWhile
    \State {\bf return} $X = \Sigma_d^{1/2} U_d^\T$.
\EndProcedure
\end{algorithmic}\label{alg:EuclideanEmbedding}
\end{algorithm}
\vfill

\end{document}